\newcommand{\argmin}{\operatornamewithlimits{argmin}}
\newcommand{\argmax}{\operatornamewithlimits{argmax}}
\newtheorem{theorem}{Theorem}[section]
\newtheorem{lemma}[theorem]{Lemma}
\newenvironment{proof}[1][Proof]{\begin{trivlist}
\item[\hskip \labelsep {\bfseries #1}]}{\end{trivlist}}
\newcommand{\qed}{\nobreak \ifvmode \relax \else
   \ifdim\lastskip<1.5em \hskip-\lastskip
   \hskip1.5em plus0em minus0.5em \fi \nobreak
   \vrule height0.75em width0.5em depth0.25em\fi}
\begin{document}
\title{Learning Stable Multilevel Dictionaries for\\
Sparse Representations}

\author{Jayaraman~J.~Thiagarajan, 
    Karthikeyan~Natesan~Ramamurthy
    and~Andreas~Spanias
\thanks{The authors are with the SenSIP Center, School of Electrical, Computer and Energy Engineering, Arizona State University, Tempe, AZ, 85287.}\\
E-mail: \{jjayaram,knatesan,spanias\}@asu.edu.
}

\markboth{Learning Stable Multilevel Dictionaries for Sparse Representations}%
{Shell \MakeLowercase{\textit{et al.}}: Learning Stable Multilevel Dictionaries Sparse Representations}

\maketitle

\begin{abstract}
Sparse representations using learned dictionaries are being increasingly used with success in several data processing and machine learning applications. The availability of abundant training data necessitates the development of efficient, robust and provably good dictionary learning algorithms.  Algorithmic stability and generalization are desirable characteristics for dictionary learning algorithms that aim to build global dictionaries which can efficiently model any test data similar to the training samples. In this paper, we propose an algorithm to learn dictionaries for sparse representations from large scale data, and prove that the proposed learning algorithm is stable and generalizable asymptotically. The algorithm employs a 1-D subspace clustering procedure, the K-hyperline clustering, in order to learn a hierarchical dictionary with multiple levels. We also propose an information-theoretic scheme to estimate the number of atoms needed in each level of learning and develop an ensemble approach to learn robust dictionaries. Using the proposed dictionaries, the sparse code for novel test data can be computed using a low-complexity pursuit procedure. We demonstrate the stability and generalization characteristics of the proposed algorithm using simulations. We also evaluate the utility of the multilevel dictionaries in compressed recovery and subspace learning applications.
\end{abstract}

\IEEEpeerreviewmaketitle

\section{Introduction}
\subsection{Dictionary Learning for Sparse Representations}
\label{sec:sparse_rep_intro}
\IEEEPARstart{S}{everal} types of naturally occurring data have most of their energy concentrated in a small number of features when represented using an linear model. In particular, it has been shown that the statistical structure of naturally occurring signals and images allows for their efficient representation as a sparse linear combination of elementary features \cite{Field1994}. A finite collection of normalized features is referred to as a dictionary. The linear model used for general sparse coding is given by
\begin{equation}
\mathbf{y} = \boldsymbol{\Psi}\mathbf{a}+\mathbf{n},
\label{eqn:basic}
\end{equation}where $\mathbf{y} \in \mathbb{R}^M$ is the data vector and $\mathbf{\Psi} = [\boldsymbol{\psi}_1 \boldsymbol{\psi}_2 \ldots \boldsymbol{\psi}_K] \in \mathbb{R}^{M \times K}$ is the dictionary. Each column of the dictionary, referred to as an atom, is a representative pattern normalized to unit $\ell_2$ norm. $\mathbf{a} \in \mathbb{R}^{K}$ is the sparse coefficient vector and $\mathbf{n}$ is a noise vector whose elements are independent realizations from the Gaussian distribution $\mathcal{N}(0,\sigma^2)$.

The sparse coding problem is usually solved as
\begin{equation}
\hat{\mathbf{a}} = \argmin_\mathbf{a} \|\mathbf{a}\|_0 \text{ subj. to } \|\mathbf{y} - \boldsymbol{\Psi}\mathbf{a}\|_2^2 \leq \epsilon,
\label{eqn:sc_l0}
\end{equation} where $\|.\|_0$ indicates the $\ell_0$ sparsity measure which counts the number of non-zero elements, $\|.\|_2$ denotes the $\ell_2$ norm and $\epsilon$ is the error goal for the representation. The $\ell_1$ norm, denoted by $\|.\|_1$, can be used instead of $\ell_0$ measure to convexify (\ref{eqn:sc_l0}). A variety of methods can be found in the literature to obtain sparse representations efficiently  \cite{tropp,Chen_BP, Elad_2007_IS, lars}. The sparse coding model has been successfully used for inverse problems in images \cite{Elad_KSVD}, and also in machine learning applications such as classification, clustering, and subspace learning to name a few \cite{Aviyente2006, JT_radar,wright,Ramirez,scspm,Yu,zhang,LC-KSVD,JT_SLSC,l1graph}. 

The dictionary $\mathbf{\Psi}$ used in (\ref{eqn:sc_l0}) can be obtained from predefined bases, designed from a union of orthonormal bases \cite{Gribonval_Nielsen_2003}, or structured as an overcomplete set of individual vectors optimized to the data \cite{Lewicki_2000}. A wide range of batch and online dictionary learning algorithms have been proposed in the literature \cite{Mairal_Bach_Ponce_Sapiro_2009, Jenatton_Mairal_Obozinski_Bach_2010, Bar2010,Rubin2010, tosic2011,mairal2012task, zhou2012locality,wang2012supervised,ramirez2012mdl}, some of which are tailored for specific tasks. The conditions under which a dictionary can be identified from the training data using an $\ell_1$ minimization approach are derived in \cite{Gribonval2010}. The joint optimization problem for dictionary learning and sparse coding can be expressed as \cite{Elad_KSVD}
\begin{equation}
\min_{\mathbf{\Psi},\mathbf{A}} \|\mathbf{Y} - \mathbf{\Psi} \mathbf{A}\|_F^2 \text{ subj. to } \|\mathbf{a}_i\|_0 \leq S, \forall i, \|\boldsymbol{\psi}_j\|_2 = 1, \forall j,
\label{eqn:sparse_rep_dict_learn}
\end{equation} where $\mathbf{Y} = \left[\mathbf{y}_1 \mathbf{y}_2 \ldots \mathbf{y}_T \right]$ is a matrix of $T$ training vectors, $\mathbf{A} = \left[\mathbf{a}_1 \mathbf{a}_2 \ldots \mathbf{a}_T \right]$ is the coefficient matrix, $S$ is the sparsity of the coefficient vector and $\|.\|_F$ denotes the Frobenius norm. 

\subsection{Multilevel Learning}
\label{sec:MLD_intro}
In this paper, we propose a hierarchical multilevel dictionary learning algorithm that is implicitly regularized to aid in sparse approximation of data. The proposed multilevel dictionary (MLD) learning algorithm is geared towards obtaining \textit{global} dictionaries for the entire probability space of the data, which are \textit{provably} stable, and generalizable to novel test data. In addition, our algorithm involves simple schemes for learning and representation: a 1-D subspace clustering algorithm (\textit{K-hyperline clustering} \cite{Cichoki2009}) is used to infer atoms in each level, and $1-$sparse representations are obtained in each level using a pursuit scheme that employs just \textit{correlate-and-max} operations. In summary, the algorithm creates a sub-dictionary for each level and obtains a residual which is used as the training data for the next level, and this process is continued until a pre-defined stopping criterion is reached.

The primary utility of sparse models with learned dictionaries in data processing and machine learning applications stems from the fact that the dictionary atoms serve as \textit{predictive features}, capable of providing a good representation for some aspect of the test data. From the viewpoint of statistical learning theory \cite{poggio2004}, a good predictive model is one that is stable and generalizable, and MLD learning satisfies both these properties. To the best of our knowledge, there is no other dictionary learning method which has been proven to satisfy these properties. Generalization ensures that the learned dictionary can successfully represent test data drawn from the same probability space as the training data, and stability guarantees that it is possible to reliably learn such a dictionary from an arbitrary training set. In other words, the asymptotic stability and generalization of MLD provides theoretical justification for the uniformly good performance of global multilevel dictionaries. We can minimize the risk of overfitting further by choosing a proper model order. We propose a method based on the minimum description length (MDL) principle \cite{grunwald2005advances} to choose the optimal model order, which in our case corresponds to the number of dictionary elements in each level. Recently, other approaches have been proposed to choose the best order for a given sparse model using MDL \cite{ramirez2012mdl}, so that the generalization error is minimized. However, the difference in our case is that, in addition to optimizing the model order for a given training set using MDL, we prove that \textit{any} dictionary learned using MLD is generalizable and stable. Since both generalization and stability are asymptotic properties, we also propose a robust variant of our MLD algorithm using randomized ensemble methods, to obtain an improved performance with test data. Note that our goal is not to obtain dictionaries optimized for a specific task \cite{mairal2012task}, but to propose a general predictive sparse modeling framework that can be suitably adapted for any task.

The dictionary atoms in MLD are structurally regularized, and therefore the hierarchy in representation is imposed implicitly for the novel test data, leading to improved recovery in ill-posed and noise-corrupted problems. Considering dictionary learning with image patches as an example, in MLD the predominant atoms in the first few levels (see Figure \ref{Fig:dict_BSDS}) always contribute the highest energy to the representation. For natural image data, it is known that the patches are comprised of geometric patterns or stochastic textures or a combination of both \cite{Zhu2010}. Since the geometric patterns usually are of higher energy when compared to stochastic textures in images, MLD learns the geometric patterns in the first few levels and stochastic textures in the last few levels, thereby adhering to the natural hierarchy in image data. The hierarchical multistage vector quantization (MVQ) \cite{Gray} is related to MLD learning. The important difference, however, is that dictionaries obtained for sparse representations must assume that the data lies in a union-of-subspaces, and the MVQ does not incorporate this assumption. Note that multilevel learning is also different from the work in \cite{Yu_2010}, where multiple sub-dictionaries are designed and one of them is chosen for representing a group of patches.

\subsection{Stability and Generalization in Learning}
A learning algorithm is a map from the space of training examples to the hypothesis space of functional solutions. In clustering, the learned function is completely characterized by the cluster centers. Stability of a clustering algorithm implies that the cluster centroids learned by the algorithm are not significantly different when different sets of i.i.d. samples from the same probability space are used for training \cite{kmeansRakhlin2007}. When there is a unique minimizer to the clustering objective with respect to the underlying data distribution, stability of a clustering algorithm is guaranteed \cite{BenDavid2006} and this analysis has been extended to characterize the stability of K-means clustering in terms of the number of minimizers \cite{BenDavid2007}. In \cite{JT_khyp}, the stability properties of the K-hyperline clustering algorithm have been analyzed and they have been shown to be similar to those of K-means clustering. Note that all the stability characterizations depend only on the underlying data distribution and the number of clusters, and not on the actual training data itself. Generalization implies that the average empirical training error becomes asymptotically close to the expected error with respect to the probability space of data. In \cite{Maurer2010}, the generalization bound for sparse coding in terms of the number of samples $T$, also referred to as sample complexity, is derived and in \cite{Vainsencher2011} the bound is improved by assuming a class of dictionaries that are nearly orthogonal. Clustering algorithms such as the K-means and the K-hyperline can be obtained by constraining the desired sparsity in (\ref{eqn:sparse_rep_dict_learn}) to be $1$. Since the stability characteristics of clustering algorithms are well understood, employing similar tools to analyze a general dictionary learning framework such as MLD can be beneficial. 


\subsection{Contributions}
\label{sec:contrib}
In this paper, we propose the MLD learning algorithm to design global representative dictionaries for image patches. We show that, for a sufficient number of levels, the proposed algorithm converges, and also demonstrate that a multilevel dictionary with a sufficient number of atoms per level exhibits energy hierarchy (Section \ref{sec:convergence}). Furthermore, in order to estimate the number of atoms in each level of MLD, we provide an information-theoretic approach based on the MDL principle (Section \ref{sec:mdl_lev_est}). In order to compute sparse codes for test data using the proposed dictionary, we develop the simple Multilevel Pursuit (MulP) procedure and quantify its computational complexity (Section \ref{sec:sparse_approx_MLD}). We also propose a method to obtain robust dictionaries with limited training data using ensemble methods (Section \ref{sec:robust_MLD}). Some preliminary algorithmic details and results obtained using MLD have been reported in \cite{JT_MLD}.

Using the fact that the K-hyperline clustering algorithm is stable, we perform stability analysis of the MLD algorithm. For any two sets of i.i.d. training samples from the same probability space, as the number of training samples $T \rightarrow \infty$, we show that the dictionaries learned become close to each other asymptotically. When there is a unique minimizer to the objective in each level of learning, this holds true even if the training sets are completely disjoint. However, when there are multiple minimizers for the objective in at least one level, we prove that the learned dictionaries are asymptotically close when the difference between their corresponding training sets is $o(\sqrt{T})$. Instability of the algorithm when the difference between two training sets is $\Omega(\sqrt{T})$, is also shown for the case of multiple minimizers (Section \ref{sec:stability}-C). Furthermore, we prove the asymptotic generalization of the learning algorithm (Section \ref{sec:gen_proof}). 

In addition to demonstrating the stability and the generalization behavior of MLD learning with image data (Sections \ref{sec:stab} and \ref{sec:gen}), we evaluate its performance in compressed recovery of images (Section \ref{sec:comp_rec}). Due to its theoretical guarantees, the proposed MLD effectively recovers novel test images from severe degradation (random projection). Interestingly, the proposed greedy pursuit with robust multilevel dictionaries results in improved recovery performance when compared to $\ell_1$ minimization with online dictionaries, particularly at reduced number of measurements and in presence of noise. Furthermore, we perform subspace learning with graphs constructed using sparse codes from MLD and evaluate its performance in classification (Section \ref{sec:subspace}). We show that the proposed approach outperforms subspace learning with neighborhood graphs as well as graphs based on sparse codes from conventional dictionaries.

\section{Background}
\label{sec:background}
In this section, we describe the K-hyperline clustering, a 1-D subspace clustering procedure proposed in \cite{Cichoki2009}, which forms a building block of the proposed dictionary learning algorithm. Furthermore, we briefly discuss the results for stability analysis of K-means and K-hyperline algorithms reported in \cite{kmeansRakhlin2007} and \cite{JT_khyp} respectively. The ideas described in this section will be used in Section \ref{sec:stability} to study the stability characteristics of the proposed dictionary learning procedure.

\subsection{K-hyperline Clustering Algorithm}
\label{sec:k_hyp_clus_alg}
The K-hyperline clustering algorithm is an iterative procedure that performs a least squares fit of $K$ $1$-D linear subspaces to the training data \cite{Cichoki2009}. Note that the K-hyperline clustering is a special case of general subspace clustering methods proposed in \cite{Agarwal2004,Tseng2000}, when the subspaces are $1-$dimensional and constrained to pass through the origin. In contrast with K-means, K-hyperline clustering allows each data sample to have an arbitrary coefficient value corresponding to the centroid of the cluster it belongs to. Furthermore, the cluster centroids are normalized to unit $\ell_2$ norm. Given the set of $T$ data samples $\mathbf{Y} = \{\mathbf{y}_i\}_{i=1}^T$ and the number of clusters $K$, K-hyperline clustering proceeds in two stages after initialization: the cluster assignment and the cluster centroid update. In the cluster assignment stage, training vector $\mathbf{y}_i$ is assigned to a cluster $j$ based on the minimum distortion criteria, $\mathcal{H}(\mathbf{y}_i) = \argmin_j d(\mathbf{y}_i,\boldsymbol{\psi}_j)$, where the distortion measure is
\begin{equation}
d(\mathbf{y},\boldsymbol{\psi}) = \|\mathbf{y} - \boldsymbol{\psi} (\mathbf{y}^T \boldsymbol{\psi})\|_2^2.
\label{eqn:clust_dist}
\end{equation}In the cluster centroid update stage, we perform singular value decomposition (SVD) of $\mathbf{Y}_j = [\mathbf{y}_i]_{i \in \mathcal{C}_j}$, where $\mathcal{C}_j = \{i|\mathcal{H}(\mathbf{y}_i) = j\}$ contains indices of training vectors assigned to the cluster $j$. The cluster centroid is updated as the left singular vector corresponding to the largest singular value of the decomposition. This can also be computed using a linear iterative procedure. At iteration $t+1$, the $j^\text{th}$ cluster centroid is given by
\begin{equation}
\boldsymbol{\psi}_j^{(t+1)} = \displaystyle \mathbf{Y}_j \mathbf{Y}_j^T \boldsymbol{\psi}_j^{(t)} / \|\mathbf{Y}_j \mathbf{Y}_j^T \boldsymbol{\psi}_j^{(t)}\|_2.
\label{eqn:iter_clust_cent}
\end{equation} Usually a few iterations are sufficient to obtain the centroids with good accuracy. 


\subsection{Stability Analysis of Clustering Algorithms}
\label{sec:stability_intro}
Analyzing the stability of unsupervised clustering algorithms can be valuable in terms of understanding their behavior with respect to perturbations in the training set. These algorithms extract the underlying structure in the training data and the quality of clustering is determined by an accompanying cost function. As a result, any clustering algorithm can be posed as an Empirical Risk Minimization (ERM) procedure, by defining a hypothesis class of loss functions to evaluate the possible cluster configurations and to measure their quality \cite{Caponnetto2006}. For example, K-hyperline clustering can be posed as an ERM problem over the distortion function class 
\begin{equation}
\mathcal{G}_K=\left\{g_{\boldsymbol{\Psi}}(\mathbf{y})= d(\mathbf{y},\boldsymbol{\psi}_j), j=\argmax_{l \in \{1, \cdots, K\}} |\mathbf{y}^T \boldsymbol{\psi}_l| \right\}.
\label{eqn:dist_func_class}
\end{equation} The class $\mathcal{G}_K$ is constructed with functions $g_{\boldsymbol{\Psi}}$ corresponding to all possible combinations of $K$ unit length vectors from the $\mathbb{R}^M$ space for the set $\mathbf{\Psi}$. Let us define the probability space for the data in $\mathbb{R}^M$ as $(\mathcal{Y}, \boldsymbol{\Sigma},P)$, where $\mathcal{Y}$ is the sample space and $\boldsymbol{\Sigma}$ is a sigma-algebra on $\mathcal{Y}$, i.e., the collection of subsets of $\mathcal{Y}$ over which the probability measure $P$ is defined. The training samples, $\{\mathbf{y}_i\}_{i=1}^T$, are i.i.d. realizations from this space. 

Ideally, we are interested in computing the cluster centroids $\hat{\mathbf{\Psi}}$ that minimize the expected distortion $\mathbb{E}[g_{\boldsymbol{\Psi}}]$ with respect to the probability measure $P$. However, the underlying distribution of the data samples is not known and hence we resort to minimizing the average empirical distortion with respect to the training samples $\{\mathbf{y}_i\}_{i=1}^T$ as
\begin{equation}
g_{\hat{\mathbf{\Psi}}} = \argmin_{g \in \mathcal{G}_K} \frac{1}{T} \sum_{i=1}^T g_{\boldsymbol{\Psi}}(\mathbf{y}_i).
\label{eqn:best_func}
\end{equation} When the empirical averages of the distortion functions in $\mathcal{G}_K$ uniformly converge to the expected values over all probability measures $P$,
\begin{equation}
\lim_{T \rightarrow \infty} \sup_{P} \mathbb{P}\left(\sup_{g_{\boldsymbol{\Psi}} \in \mathcal{G}_K}\left|\mathbb{E}[g_{\boldsymbol{\Psi}}] - \frac{1}{T} \sum_{i=1}^T g_{\boldsymbol{\Psi}}(\mathbf{y}_i)\right| > \delta \right) = 0,
\label{eqn:donsker}
\end{equation}for any $\delta > 0$, we refer to the class $\mathcal{G}_K$ as uniform Glivenko-Cantelli (uGC). In addition, if the class also satisfies a version of the central limit theorem, it is defined as uniform Donsker \cite{Caponnetto2006}.  In order to determine if $\mathcal{G}_K$ is uniform Donsker, we have to verify if the covering number of $\mathcal{G}_K$ with respect to the supremum norm, $N_{\infty}(\gamma,\mathcal{G}_K)$, grows polynomially in the dimensions $M$ \cite{kmeansRakhlin2007}. Here, $\gamma$ denotes the maximum $L_{\infty}$ distance between an arbitrary distortion function in  $\mathcal{G}_K$, and the function that covers it. For K-hyperline clustering, the covering number is upper bounded by \cite[Lemma 2.1]{JT_khyp}
\begin{equation}
N_{\infty}(\gamma,\mathcal{G}_K) \leq \left(\frac{8R^3 K +\gamma}{\gamma}\right)^{MK},
\label{eqn:cover_num}
\end{equation} where we assume that the data lies in an $M$-dimensional $\ell_2$ ball of radius $R$ centered at the origin.  Therefore, $\mathcal{G}_K$ belongs to the uniform Donsker class.

Stability implies that the algorithm should produce cluster centroids that are not significantly different when different i.i.d. sets from the same probability space are used for training \cite{BenDavid2006,BenDavid2007,kmeansRakhlin2007}. Stability is characterized based on the number of minimizers to the clustering objective with respect to the underlying data distribution. A minimizer corresponds to a function $g_{\mathbf{\Psi}} \in \mathcal{G}_K$ with the minimum expectation $\mathbb{E}[g_{\mathbf{\Psi}}]$. Stability analysis of K-means clustering has been reported in \cite{BenDavid2007,kmeansRakhlin2007}. Though the geometry of K-hyperline clustering is different from that of K-means, the stability characteristics of the two algorithms have been found to be similar \cite{JT_khyp}. 

Given two sets of cluster centroids $\boldsymbol{\Psi} = \{\boldsymbol{\psi}_1 ,\ldots,\boldsymbol{\psi}_K\}$ and $\boldsymbol{\Lambda}=\{\boldsymbol{\lambda}_1,\ldots,\boldsymbol{\lambda}_K\}$ learned from training sets of $T$ i.i.d. samples each realized from the same probability space, let us define the $L_1(P)$ distance between the clusterings as
\begin{equation}
\|g_{\mathbf{\Psi}} - g_{\mathbf{\Lambda}}\|_{L_1(P)} = \int |g_{\mathbf{\Psi}}(\mathbf{y}) - g_{\mathbf{\Lambda}}(\mathbf{y})| dP(\mathbf{y}).
\label{eqn:dist_clustering}
\end{equation}When $T \rightarrow \infty$, and $\mathcal{G}_K$ is uniform Donsker, stability in terms of the distortion functions  is expressed as
\begin{equation}
\|g_{\mathbf{\Psi}} - g_{\mathbf{\Lambda}}\|_{L_1(P)} \xrightarrow{P} 0,
\label{eqn:dist_clustering_stable}
\end{equation} where $\xrightarrow{P}$ denotes convergence in probability. This holds true even for $\mathbf{\Psi}$ and $\mathbf{\Lambda}$ learned from completely disjoint training sets, when there is a unique minimizer to the clustering objective. When there are multiple minimizers, (\ref{eqn:dist_clustering_stable}) holds true with respect to a change in $o(\sqrt{T})$ samples between two training sets and fails to hold with respect to a change in  $\Omega(\sqrt{T})$ samples \cite{JT_khyp}. The distance between the cluster centroids themselves is defined as \cite{kmeansRakhlin2007} 
\begin{equation}
\Delta(\mathbf{\Psi},\mathbf{\Lambda})=\max_{1 \leq j \leq K} \min_{1 \leq l \leq K} \left[(d(\boldsymbol{\psi}_j,\boldsymbol{\lambda}_l))^{1/2} + (d(\boldsymbol{\psi}_l,\boldsymbol{\lambda}_j))^{1/2}\right].
\label{eqn:dist_clus_cent}
\end{equation}
\begin{lemma}[\cite{JT_khyp}]
If the $L_1(P)$ distance between the distortion functions for the clusterings $\mathbf{\Psi}$ and $\mathbf{\Lambda}$ is bounded as $\|g_\mathbf{\Psi} - g_\mathbf{\Lambda}\|_{L_1(P)} < \mu$, for some $\mu > 0$, and $dP(\mathbf{y})/d\mathbf{y} > C$, for some $C>0$, then $\Delta(\mathbf{\Psi},\mathbf{\Lambda}) \leq 2 \sin (\rho)$ where
\begin{equation}
\rho \leq 2 \sin^{-1}\left[\frac{1}{r} \left(\frac{\mu}{\hat{C}_{C,M}}\right)^{\frac{1}{M+1}} \right].
\label{eqn:rho_bound}
\end{equation} Here the training data is assumed to lie outside an $M$-dimensional $\ell_2$ ball of radius $r$ centered at the origin, and the constant $\hat{C}_{C,M}$ depends only on $C$ and $M$.
\label{thm:dist_clus_cent}
\end{lemma}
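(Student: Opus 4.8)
The plan is to argue by contraposition. Assume $\Delta(\mathbf{\Psi},\mathbf{\Lambda}) > 2\sin\rho$ and aim to show this forces $\|g_{\mathbf{\Psi}}-g_{\mathbf{\Lambda}}\|_{L_1(P)} \geq \hat{C}_{C,M}\,(r\sin(\rho/2))^{M+1}$; taking the contrapositive and then choosing $\rho$ so that the right-hand side equals the hypothesized bound $\mu$ — that is, $\sin(\rho/2) = \tfrac{1}{r}(\mu/\hat C_{C,M})^{1/(M+1)}$, i.e. $\rho = 2\sin^{-1}[\tfrac{1}{r}(\mu/\hat C_{C,M})^{1/(M+1)}]$ — yields exactly $\Delta(\mathbf{\Psi},\mathbf{\Lambda})\le 2\sin\rho$ with $\rho$ the claimed value. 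So the whole proof reduces to the lower bound on $\|g_{\mathbf{\Psi}}-g_{\mathbf{\Lambda}}\|_{L_1(P)}$ under the separation assumption.

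To set up the geometry, I would first record that for a unit vector $\boldsymbol{\psi}$ one has $d(\mathbf{y},\boldsymbol{\psi}) = \|\mathbf{y}\|^2\sin^2\theta$, where $\theta$ is the angle between $\mathbf{y}$ and $\mathrm{span}(\boldsymbol{\psi})$, and $g_{\mathbf{\Psi}}(\mathbf{y}) = \min_j d(\mathbf{y},\boldsymbol{\psi}_j)$; in particular $(d(\boldsymbol{\psi}_j,\boldsymbol{\lambda}_l))^{1/2}$ is the sine of the angular distance between the lines through $\boldsymbol{\psi}_j$ and $\boldsymbol{\lambda}_l$, so $\Delta$ is a symmetrised angular Hausdorff-type distance between the two sets of centroid directions. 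From $\Delta > 2\sin\rho$ I would extract a witness centroid — after possibly interchanging $\mathbf{\Psi}$ and $\mathbf{\Lambda}$, say a centroid $\boldsymbol{\psi}_{j^\star}$ — whose line makes angle greater than $\rho$ with the line through \emph{every} $\boldsymbol{\lambda}_l$. The diagonal term $l=j^\star$ in the definition of $\Delta$ already gives $(d(\boldsymbol{\psi}_{j^\star},\boldsymbol{\lambda}_{j^\star}))^{1/2}>\sin\rho$, and a combinatorial argument over the remaining indices (this is precisely the role of the cross-terms $(d(\boldsymbol{\psi}_l,\boldsymbol{\lambda}_j))^{1/2}$, together with the freedom to swap $\mathbf{\Psi}\leftrightarrow\mathbf{\Lambda}$) promotes this to separation from all $\boldsymbol{\lambda}_l$.

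The quantitative core is then a cone-and-volume estimate. Around the witness direction take the double cone $\mathcal{K}_\beta=\{\mathbf{y}: r\le\|\mathbf{y}\|\le R,\ \angle(\mathbf{y},\mathrm{span}(\boldsymbol{\psi}_{j^\star}))\le\beta\}$ with $\beta=\rho/3$, strictly below the separation $\rho$. For $\mathbf{y}\in\mathcal{K}_\beta$ we have $g_{\mathbf{\Psi}}(\mathbf{y})\le d(\mathbf{y},\boldsymbol{\psi}_{j^\star})\le\|\mathbf{y}\|^2\sin^2\beta$, while the triangle inequality for angular distance gives $\angle(\mathbf{y},\mathrm{span}(\boldsymbol{\lambda}_l))>\rho-\beta$ for every $l$, hence $g_{\mathbf{\Lambda}}(\mathbf{y})>\|\mathbf{y}\|^2\sin^2(\rho-\beta)$; therefore $|g_{\mathbf{\Psi}}(\mathbf{y})-g_{\mathbf{\Lambda}}(\mathbf{y})|>\|\mathbf{y}\|^2(\sin^2\tfrac{2\rho}{3}-\sin^2\tfrac{\rho}{3})=\|\mathbf{y}\|^2\sin\rho\sin\tfrac{\rho}{3}\ge c_0\,r^2\sin^2\tfrac{\rho}{2}$ for a universal $c_0>0$. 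Using $dP\ge C\,d\mathbf{y}$ on the support and the elementary estimate $\mathrm{vol}(\mathcal{K}_\beta)\gtrsim (R^M-r^M)\sin^{M-1}\beta\gtrsim c_M'\,\sin^{M-1}\tfrac{\rho}{2}$ (and absorbing the dependence on $R$, which is itself bounded in terms of $C$ and $M$ since $P$ has total mass one and density at least $C$ on its support), integrating the pointwise gap over $\mathcal{K}_\beta$ gives $\|g_{\mathbf{\Psi}}-g_{\mathbf{\Lambda}}\|_{L_1(P)}\ge \hat C_{C,M}\,(r\sin\tfrac{\rho}{2})^{M+1}$; inverting this inequality produces the stated bound on $\rho$ and completes the proof.

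The main obstacle is the extraction of the witness direction in the second step: $\Delta$ is a max–min over \emph{paired} terms, so "$\Delta$ large" does not, on its face, hand over a single centroid that is far from every centroid of the other clustering, and one must exploit the symmetrising cross-terms (and, if necessary, the swap $\mathbf{\Psi}\leftrightarrow\mathbf{\Lambda}$) to obtain a clean cone over which to integrate. A secondary technical point is that the cone half-angle has to be taken strictly smaller than the separation (e.g. $\rho/3$, not $\rho/2$) so that the inner bound $\|\mathbf{y}\|^2\sin^2\beta$ on $g_{\mathbf{\Psi}}$ is genuinely below the outer bound $\|\mathbf{y}\|^2\sin^2(\rho-\beta)$ on $g_{\mathbf{\Lambda}}$; the exponent $M+1$ then appears as $(M-1)$ from the solid angle of the cone plus $2$ from the $\sin^2$ gap in the distortions, and the remaining bookkeeping of the constant $\hat C_{C,M}$ (in particular the exact power of $r$, and the reduction of $R$ to a quantity controlled by $C$ and $M$) needs to be carried out carefully.
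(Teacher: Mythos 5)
Your quantitative core is sound and is essentially the right engine for this result: $d(\mathbf{y},\boldsymbol{\psi})=\|\mathbf{y}\|_2^2\sin^2\theta$, a thin (double) cone around a direction where the two distortion functions must disagree, the density bound $dP(\mathbf{y})/d\mathbf{y}>C$, and the exponent $M+1$ arising as $(M-1)$ from the solid angle plus $2$ from the $\sin^2$ gap, then inverting to get the $\sin^{-1}$ bound. Note that the paper does not prove this lemma itself (it is imported from \cite{JT_khyp}); the closest in-document analogue is the proof of Lemma \ref{lem:dist_clust_cent}, which runs exactly this kind of cone integration, but it does so \emph{pair-by-pair}, pairing each $\boldsymbol{\psi}_{l,j}$ with its closest centroid in $\mathbf{\Lambda}_l$ and integrating over the half-angle cone on which that closest centroid realizes $g_{\mathbf{\Lambda}_l}$, rather than extracting a single witness direction.

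The genuine gap is precisely the step you flag and defer: from $\Delta(\mathbf{\Psi},\mathbf{\Lambda})>2\sin\rho$ you cannot, by any combinatorial argument, extract a centroid of one clustering whose line is at angle greater than $\rho$ from every centroid of the other. Because of the cross terms in (\ref{eqn:dist_clus_cent}), $\Delta$ can be large while every centroid of each clustering is within a vanishing angle of some centroid of the other: take $K=3$, $\mathbf{\Psi}=\{\mathbf{u},\mathbf{u}',\mathbf{v}\}$ and $\mathbf{\Lambda}=\{\mathbf{u},\mathbf{v},\mathbf{v}'\}$ with $\mathbf{u}'\approx\mathbf{u}$, $\mathbf{v}'\approx\mathbf{v}$; the term for $j=2$ is $\min_l\bigl[(d(\boldsymbol{\psi}_2,\boldsymbol{\lambda}_l))^{1/2}+(d(\boldsymbol{\psi}_l,\boldsymbol{\lambda}_2))^{1/2}\bigr]\approx\sin\angle(\mathbf{u},\mathbf{v})$, which is order one, while $\|g_{\mathbf{\Psi}}-g_{\mathbf{\Lambda}}\|_{L_1(P)}$ is arbitrarily small since both distortion functions are uniformly close to that of $\{\mathbf{u},\mathbf{v}\}$. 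So no witness exists there, and your contrapositive target cannot be reached by a witness-cone argument for arbitrary centroid configurations; what the cone argument genuinely delivers is the directed statement that every $\boldsymbol{\psi}_j$ has some $\boldsymbol{\lambda}_l$ within a small angle (and vice versa), and controlling the cross terms requires the pairing structure used in Lemma \ref{lem:dist_clust_cent} together with properties of centroids actually produced by the clustering (which excludes near-duplicate atoms), as in \cite{JT_khyp}. Two secondary points: your cone estimate as written yields $r^{M+2}\sin^{M+1}(\rho/2)$ rather than $(r\sin(\rho/2))^{M+1}$, and the claim that $R$ is controlled by $C$ and $M$ is incorrect (unit mass and density at least $C$ bound the volume of the support, not its outer radius), so the constant $\hat{C}_{C,M}$ does not come out of your bookkeeping as stated.
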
 When the clustering algorithm is stable according to (\ref{eqn:dist_clustering_stable}), for admissible values of $r$, Lemma \ref{thm:dist_clus_cent} shows that the cluster centroids become arbitrarily close to each other.

\begin{table}[t]
\caption{Algorithm for building a multilevel dictionary.}
\centering
\begin{tabular}{|l|}
\hline
\textbf{Input} \\
$\mathbf{Y} = \left[ \mathbf{y}_i \right]_{i=1}^T$, $M \times T$ matrix of training vectors. \\
$L$, maximum number of levels of the dictionary.\\
$K_l$, number of dictionary elements in level $l$, $l = \{1,2,...,L\}$.\\
$\epsilon$, error goal of the representation.\\
\\
\textbf{Output} \\
$\mathbf{\Psi}_l$, adapted sub-dictionary for level $l$.\\
\\
\textbf{Algorithm}\\
Initialize: $l=1$ and $\mathbf{R}_0 = \mathbf{Y}$.\\
$\Lambda_0= \{i\text{ }|\text{ }\|\mathbf{r}_{0,i}\|_2^2 > \epsilon, 1 \leq i \leq T\}$, index of training vectors with \\ squared norm greater than error goal.\\
$\hat{\mathbf{R}}_0 = \left[ \mathbf{r}_{0,i} \right]_{i \in \Lambda_0}.$\\
\\
\textbf{while} $\Lambda_{l-1} \neq \emptyset$ and $l \leq L$\\
\quad Initialize: \\
\quad \quad $\mathbf{A}_l$, coefficient matrix, size $K_l \times M$, all zeros.\\
\quad \quad $\mathbf{R}_l$, residual matrix for level $l$, size $M \times T$, all zeros.\\
\quad $\{\mathbf{\Psi}_l,\hat{\mathbf{A}}_l \} = \text{KLC}(\hat{\mathbf{R}}_{l-1},K_l)$.\\
\quad $\mathbf{R}_l^t = \hat{\mathbf{R}}_{l-1} - \mathbf{\Psi}_l \hat{\mathbf{A}}_l$.\\
\quad $\mathbf{r}_{l,i} = \mathbf{r}_{l,j}^t \text{ where } i = \Lambda_{l-1}(j), \text{ } \forall j = {1,...,|\Lambda_{l-1}|}$.\\
\quad $\mathbf{a}_{l,i} = \hat{\mathbf{a}}_{l,j} \text{ where } i = \Lambda_{l-1}(j), \text{ } \forall j = {1,...,|\Lambda_{l-1}|}$.\\
\quad $\Lambda_l= \{i\text{ }|\text{ }\|\mathbf{r}_{l,i}\|_2^2 > \epsilon, 1 \leq i \leq T\}$.\\
\quad $\hat{\mathbf{R}}_l = \left[ \mathbf{r}_{l,i} \right]_{i \in \Lambda_l}$.\\
\quad $l \leftarrow l+1$.\\
\textbf{end}\\
\hline
\end{tabular}
\label{Table:Algorithm_MLD}
\end{table}

\section{Multilevel Dictionary Learning}
\label{sec:multilevel_dict}
In this section, we develop the multilevel dictionary learning algorithm, whose algorithmic stability and generalizability will be proved in Section \ref{sec:stability}. Furthermore, we propose strategies to estimate the number of atoms in each level and make the learning process robust for improved generalization. We also present a simple pursuit scheme to compute representations for novel test data using the MLD.

\subsection{Algorithm}
\label{sec:multilevel_dict_learning}

We denote the MLD as $\mathbf{\Psi} = [\mathbf{\Psi}_1 \mathbf{\Psi}_2 ... \mathbf{\Psi}_L]$, and the coefficient matrix as $\mathbf{A} = [\mathbf{A}_1^T \mathbf{A}_2^T ... \mathbf{A}_L^T]^T$. Here, $\mathbf{\Psi}_l$ is the sub-dictionary and $\mathbf{A}_l$ is the coefficient matrix for level $l$. The approximation in level $l$ is expressed as
\begin{equation}
\mathbf{R}_{l-1} = \mathbf{\Psi}_l\mathbf{A}_l+\mathbf{R}_{l}, \text{ for } l = {1,...,L},
\label{eqn:rep_layer_l}
\end{equation}  where $\mathbf{R}_{l-1}$, $\mathbf{R}_{l}$ are the residuals for the levels $l-1$ and $l$ respectively and $\mathbf{R}_0 = \mathbf{Y}$, the matrix of training image patches. This implies that the residual matrix in level $l-1$ serves as the training data for level $l$. Note that the sparsity of the representation in each level is fixed at $1$. Hence, the overall approximation for all levels is
\begin{equation}
\mathbf{Y} = \sum_{l=1}^L \mathbf{\Psi}_l\mathbf{A}_l+\mathbf{R}_{L}.
\label{eqn:rep_layer_all}	
\end{equation}MLD learning can be interpreted as a block-based dictionary learning problem with unit sparsity per block, where the sub-dictionary in each block can allow only a $1$-sparse representation and each block corresponds to a level. The sub-dictionary for level $l$, $\mathbf{\Psi}_l$, is the set of cluster centroids learned from the training matrix for that level, $\mathbf{R}_{l-1}$, using K-hyperline clustering. MLD learning can be formally stated as an optimization problem that proceeds from the first level until the stopping criteria is reached. For level $l$, we solve
\begin{align}
\nonumber &\argmin_{\mathbf{\Psi}_l,\mathbf{A}_l} \|\mathbf{R}_{l-1} - \mathbf{\Psi}_l\mathbf{A}_l\|_F^2 \text{ subject to } \|\mathbf{a}_{l,i}\|_0 \leq 1, \\
& \phantom{\argmin_{\mathbf{\Psi}_l,\mathbf{A}_l} \|\mathbf{R}_{l-1} - \mathbf{\Psi}_l\mathbf{A}_l\|_F^2} \text{ for } i = \{1, . . ., T\},
\label{eqn:opt_multilev_khyp}
\end{align}along with the constraint that the columns of $\mathbf{\Psi}_l$ have unit $\ell_2$ norm,  where $\mathbf{a}_{l,i}$ is the $i^\text{th}$ column of $\mathbf{A}_l$ and $T$ is the number of columns in $\mathbf{A}_l$. We adopt the notation $\{\mathbf{\Psi}_l,\mathbf{A}_l\} = \text{KLC}(\mathbf{R}_{l-1},K_l)$ to denote the problem in (\ref{eqn:opt_multilev_khyp}) where $K_l$ is the number of atoms in $\mathbf{\Psi}_l$. The stopping criteria is provided either by imposing a limit on the residual representation error or the maximum number of levels ($L$). Note that the total number of levels is the same as the maximum number of non-zero coefficients (sparsity) of the representation. The error constraint can be stated as, $\|\mathbf{r}_{l,i}\|_2^2 \leq \epsilon, \forall i = {1, ..., T}$, where $\mathbf{r}_{l,i}$ is the $i^\text{th}$ column in $\mathbf{R}_l$, and $\epsilon$ is the error goal.

Table \ref{Table:Algorithm_MLD} lists the MLD learning algorithm with a fixed $L$. We use the notation $\Lambda_l(j)$ to denote the $j^\text{th}$ element of the set $\Lambda_l$. The set $\Lambda_l$ contains the indices of the residual vectors of level $l$ whose norm is greater than the error goal. The residual vectors indexed by $\Lambda_{l}$ are stacked in the matrix, $\hat{\mathbf{R}}_l$, which in turn serves as the training matrix for the next level, $l+1$. In MLD learning, for a given level $l$, the residual $\mathbf{r}_{l,i}$ is orthogonal to the representation $\mathbf{\Psi}_l \mathbf{a}_{l,i}$. This implies that
\begin{equation}
\|\mathbf{r}_{l-1,i}\|_2^2 = \|\mathbf{\Psi}_l \mathbf{a}_{l,i}\|_2^2+\|\mathbf{r}_{l,i}\|_2^2.
\label{eqn:rep_energy_vector}
\end{equation} Combining this with the fact that $\mathbf{y}_i = \sum_{l=1}^L \mathbf{\Psi}_l \mathbf{a}_{l,i} + \mathbf{r}_{L,i}$, $\mathbf{a}_{l,i}$ is $1-$sparse, and the columns of $\mathbf{\Psi}_l$ are of unit $\ell_2$ norm, we obtain the relation
\begin{equation}
\|\mathbf{y}_i\|_2^2 = \sum_{l=1}^L \|\mathbf{a}_{l,i}\|_2^2 + \|\mathbf{r}_{L,i}\|_2^2.
\label{eqn:rep_energy_vector1}
\end{equation} Equation (\ref{eqn:rep_energy_vector1}) states that the energy of any training vector is equal to the sum of squares of its coefficients and the energy of its residual. From (\ref{eqn:rep_energy_vector}), we also have that,
\begin{equation}
\|\mathbf{R}_{l-1}\|_F^2 = \|\mathbf{\Psi}_l \mathbf{A}_l\|_F^2+\|\mathbf{R}_l\|_F^2.
\label{eqn:rep_energy_relation}
\end{equation} The training vectors for the first level of the algorithm, $\mathbf{r}_{0,i}$ lie in the ambient $\mathbb{R}^M$ space and the residuals, $\mathbf{r}_{1,i}$, lie in a finite union of $\mathbb{R}^{M-1}$ subspaces. This is because, for each dictionary atom in the first level, its residual lies in an ${M-1}$ dimensional space orthogonal to it. In the second level, the dictionary atoms can possibly lie anywhere in $\mathbb{R}^{M}$, and hence the residuals can lie in a finite union of $\mathbb{R}^{M-1}$ and $\mathbb{R}^{M-2}$ dimensional subspaces. Hence, we can generalize that the dictionary atoms for all levels lie in $\mathbb{R}^{M}$, whereas the training vectors of level $l \geq 2$, lie in finite unions of $\mathbb{R}^{M-1}, \ldots, \mathbb{R}^{M-l+1}$ dimensional subspaces of the $\mathbb{R}^M$ space.

\subsection{Convergence}
\label{sec:convergence}
The convergence of MLD learning and the energy hierarchy in the representation obtained using an MLD can be shown by providing two guarantees. The first guarantee is that for a fixed number of atoms per level, the algorithm will converge to the required error within a sufficient number of levels. This is because the K-hyperline clustering makes the residual energy of the representation smaller than the energy of the training matrix at each level (i.e.) $\|\mathbf{R}_l\|_F^2 < \|\mathbf{R}_{l-1}\|_F^2$. This follows from (\ref{eqn:rep_energy_relation}) and the fact that $\|\mathbf{\Psi}_l \mathbf{A}_l\|_F^2 > 0$.

The second guarantee is that for a sufficient number of atoms per level, the representation energy in level $l$ will be less than the representation energy in level $l-1$. To show this, we first state that for a sufficient number of dictionary atoms per level, $\|\mathbf{\Psi}_l \mathbf{A}_l\|_F^2 > \|\mathbf{R}_l\|_F^2$. This means that for every $l$
\begin{equation}
\label{eqn:en_hier1}
\|\mathbf{R}_l\|_F^2 < \|\mathbf{\Psi}_l \mathbf{A}_l\|_F^2 < \|\mathbf{R}_{l-1}\|_F^2,
\end{equation} because of (\ref{eqn:rep_energy_relation}). This implies that $\|\mathbf{\Psi}_l \mathbf{A}_l\|_F^2 < \|\mathbf{\Psi}_{l-1} \mathbf{A}_{l-1}\|_F^2$, i.e., the energy of the representation in each level reduces progressively from $l=1$ to $l=L$, thereby exhibiting energy hierarchy.

\subsection{Estimating Number of Atoms in Each Level}
\label{sec:mdl_lev_est}
The number of atoms in each level of an MLD can be optimally estimated using an information theoretic criteria such as minimum description length (MDL) \cite{grunwald2005advances}. The broad idea is that the model order, which is the number of dictionary atoms here, is chosen to minimize the total description length needed for representing the model and the data given the model. The codelength for encoding the data $\mathbf{Y}$  given the model $\mathbf{\Theta}$ is given as the negative log likelihood $-\log p (\mathbf{Y}|\mathbf{\Theta})$. The description length for the model is the number of bits needed to code the model parameters.

In order to estimate the number of atoms in each level using the MDL principle, we need to make some assumptions on the residual obtained in each level. Our first assumption will be that the a fraction $\alpha$ of the total energy in each level $E_l$ will be represented at that level and the remaining energy $(1-\alpha)E_l$ will be the residual energy. The residual and the representation energy sum up to the total energy in each level because, the residual in any level of MLD is orthogonal to the representation in that level. Therefore, at any level $l$, the represented energy will be $\alpha (1-\alpha)^{l-1} E$ and the residual energy will be $(1-\alpha)^{l} E$, where $E$ is the total energy of training data at the first level. For simplicity, we also assume that the residual at each level follows the zero-mean multinormal distribution $\mathcal{N}(\mathbf{0},\sigma_l^2 \mathbf{I}_M)$. Combining these two assumptions, the variance is estimated as $\sigma_l^2 = \frac{1}{MT} (1-\alpha)^{l} E$.

The total MDL score, which is an indicator of the information-theoretic complexity, is the sum of the negative log likelihood and the number of bits needed to encode the model. Encoding the model includes encoding the non-zero coefficients, their location, and the dictionary elements themselves. The MDL score for level $l$ with the data $\mathbf{R}_{l-1}$, dictionary $\mathbf{\Psi}_l \in \mathbb{R}^{M \times K_l}$, and the coefficient matrix $\mathbf{A}_l$ is 
\begin{align}
\nonumber &\text{MDL}(\mathbf{R}_{l-1}|\mathbf{\Psi}_l,\mathbf{A}_{l},K_l) = \frac{1}{2\sigma_l^2} \sum_{i=1}^T \|\mathbf{r}_{l-1,i}-\mathbf{\Psi}_l \mathbf{a}_{l,i}\|_2^2\\
&+\frac{1}{2} T \log (MT)+T \log (T K_l) +\frac{1}{2} K_l M \log(M T).
\label{eqn:mdlscore}
\end{align} Here, the first term in the sum represents the data description length, which is also the negative log-likelihood of the data after ignoring the constant term. The second term is the number of bits needed to code the $T$ non-zero coefficients as reals where each coefficient is coded using $0.5 \log (MT)$ bits \cite{saito1994simultaneous}. The third term denotes the bits needed to code their locations which are integers between $1$ and $T K_l$, and the fourth term represents the total bits needed to code all the dictionary elements as reals. The optimal model order $K_l$ is the number of dictionary atoms that results in the least MDL score. In practice, we test a finite number of model orders and pick the one which results in the least score. As an example, we train a dictionary using $5000$ grayscale patches of size $8 \times 8$ from the BSDS dataset \cite{BSDS_dataset}. We preprocess the patches by vectorizing them and subtracting the mean of each vectorized patch from its elements. We perform MLD learning and estimate the estimate the optimal number of dictionary atoms in each level using $\alpha = 0.25$, for a maximum of $16$ levels. For the sub-dictionary in each level, the number of atoms were varied between $10$ and $50$, and one that provided the least MDL score was chosen as optimal. The first few levels and the last level of the MLD obtained using such procedure is shown in Figure \ref{Fig:dict_BSDS}. The minimum MDL score obtained in each level is shown in \ref{Fig:mdlscore}. From these two figures, clearly, the information-theoretic complexity of the sub-dictionaries increase with the number of levels, and the atoms themselves progress from being simple geometric structures to stochastic textures.


\begin{figure}[t]
\begin{minipage}[b]{1.0\linewidth}
 \centering
\includegraphics[width = 8.5cm]{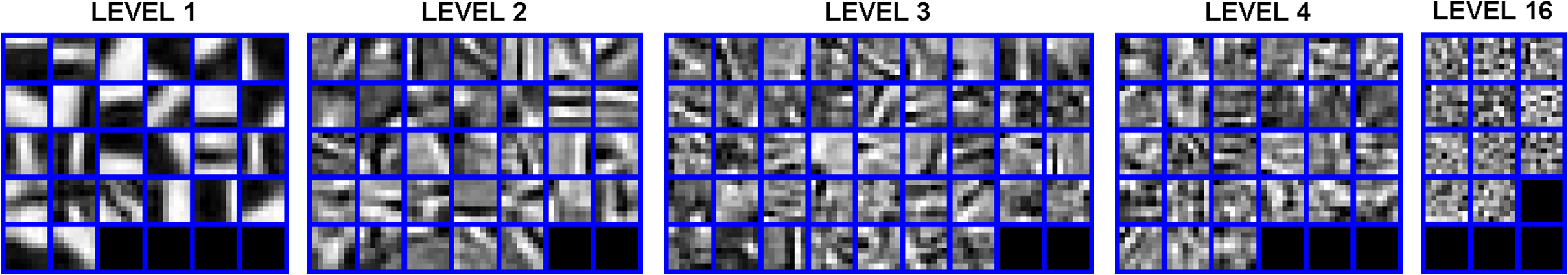}
\end{minipage}
\caption{The top $4$ levels and the last level of the MLD dictionary where the number of atoms are estimated using the MDL procedure. It comprises of geometric patterns in the first few levels and stochastic textures in the last level. Since each level has a different number of atoms, each sub-dictionary is padded with zero vectors, which appear as black patches.}
\label{Fig:dict_BSDS}
\end{figure}

\subsection{Sparse Approximation using an MLD}
\label{sec:sparse_approx_MLD}
In order to compute sparse codes for novel test data using a multilevel dictionary, we propose to perform reconstruction using a \textit{Multilevel Pursuit} (\textit{MulP}) procedure which evaluates a $1$-sparse representation for each level using the dictionary atoms from that level. Therefore, the coefficient vector for the $i^\text{th}$ data sample $\mathbf{r}_{l,i}$ in level $l$ is obtained using a simple \textit{correlate-and-max} operation, whereby we compute the correlations $\mathbf{\Psi}_l^T \mathbf{r}_{l,i}$ and pick the coefficient value and index corresponding to the maximum absolute correlation. The computational complexity of a \textit{correlate-and-max} operation is of order $M K_l$ and hence the complexity of obtaining the full representation using $L$ levels is of order $M K$, where $K = \sum_{i=1}^L K_l$ is the total number of atoms in the dictionary. Whereas, the complexity of obtaining an $L$ sparse representation on the full dictionary using Orthogonal Matching Pursuit is of order $L M K$.

\subsection{Robust Multilevel Dictionaries}
\label{sec:robust_MLD}
Although MLD learning is a simple procedure capable of handling large scale data with useful asymptotic generalization properties as described in Section (\ref{sec:gen_proof}), the procedure can be made robust and its generalization performance can be improved using randomization schemes. The Robust MLD (RMLD) learning scheme, which is closely related to \textit{Rvotes} \cite{breiman1999pasting} - a supervised ensemble learning method, improves the generalization performance of MLD as evidenced by Figure \ref{Fig:generalize}. The \textit{Rvotes} scheme randomly samples the training set to create $D$ sets of $T_D$ samples each, where $T_D \ll T$. The final prediction is obtained by averaging the predictions from the multiple hypotheses learned from the training sets. For learning level $l$ in RMLD, we draw $D$ subsets of randomly chosen training samples, $\{\mathbf{Y}_l^{(d)}\}_{d=1}^D$ from the original training set $\mathbf{Y}_l$ of size $T$, allowing for overlap across the subsets. Note that here, $\mathbf{Y}_l = \mathbf{R}_{l-1}$. The superscript here denotes the index of the subset. For each subset $\mathbf{Y}_l^{(d)}$ of size $T_D \ll T$, we learn a sub-dictionary $\mathbf{\Psi}_l^{(d)}$ with $K_l$ atoms using K-hyperline clustering. For each training sample in $\mathbf{Y}_l$, we compute $1-$sparse representations using all the $D$ sub-dictionaries, and denote the set of coefficient matrices as $\{\mathbf{A}_l^{(d)}\}_{d=1}^D$. The approximation for the $i^{\text{th}}$ training sample in level $l$, $\mathbf{y}_{l,i}$, is computed as the average of approximations using all $D$ sub-dictionaries, $\frac{1}{D} \sum_d \mathbf{\Psi}_l^{(d)} \mathbf{a}_{l,i}^{(d)}$. The ensemble approximations for all training samples in the level can be used to compute the set of residuals, and this process is repeated for a desired number of levels, to obtain an RMLD. 

Reconstruction of test data with an RMLD is performed by extending the multilevel pursuit. We obtain $D$ approximations for each data sample at a given level, average the approximations, compute the residual and repeat this for the subsequent levels. Note that this can also be implemented as multiple \textit{correlate-and-max} operations per data sample per level. Clearly, the computational complexity for obtaining a sparse representation using the RMLD is of order $D M K$, where $K = \sum_{i=1}^L K_l$.

\begin{figure}[t]
\begin{minipage}[b]{1.0\linewidth}
 \centering
\includegraphics[width = 8.5cm]{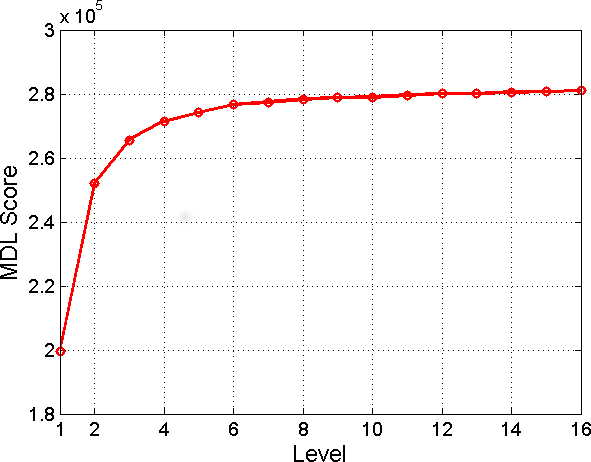}
\end{minipage}
\caption{The minimum MDL score of each level. The information-theoretic complexity of the sub-dictionaries increase with the number of levels.}
\label{Fig:mdlscore}
\end{figure}


\section{Stability and Generalization}
\label{sec:stability}
In this section, the behavior of the proposed dictionary learning algorithm is considered from the viewpoint of algorithmic stability: the behavior of the algorithm with respect to the perturbations in the training set. It will be shown that the dictionary atoms learned by the algorithm from two different training sets whose samples are realized from the same probability space, become arbitrarily close to each other, as the number of training samples $T \rightarrow \infty$. Since the proposed MLD learning is equivalent to learning K-hyperline cluster centroids in multiple levels, the stability analysis of K-hyperline clustering \cite{JT_khyp}, briefly discussed in Section \ref{sec:stability_intro}, will be utilized in order to prove its stability. For each level of learning, the cases of single and multiple minimizers to the clustering objective will be considered. Proving that the learning algorithm is stable will show that the global dictionaries learned from the data depend only on the probability space to which the training samples belong and not on the actual samples themselves, as $T \rightarrow \infty$. We also show that the MLD learning generalizes asymptotically, i.e., the difference between expected error and average empirical error in training approaches zero, as $T \rightarrow \infty$. Therefore, the expected error for novel test data, drawn from the same distribution as the training data, will approach the average empirical training error.

The stability analysis of the MLD algorithm will be performed  by considering two different dictionaries $\mathbf{\Psi}$ and $\mathbf{\Lambda}$ with $L$ levels each. Each level consists of $K_l$ dictionary atoms and the sub-dictionaries in each level are indicated by $\mathbf{\Psi}_l$ and $\mathbf{\Lambda}_l$ respectively. Sub-dictionaries $\mathbf{\Psi}_l$ and $\mathbf{\Lambda}_l$ are the cluster centers learned using K-hyperline clustering on the training data for level $l$. The steps involved in proving the overall stability of the algorithm are: (a) showing that each level of the algorithm is stable in terms of $L_1(P)$ distance between the distortion functions, defined in (\ref{eqn:dist_clustering}), as the number of training samples $T \rightarrow \infty$ (Section \ref{sec:level_stability}), (b) proving that stability in terms of $L_1(P)$ distances indicates closeness of the centers of the two clusterings (Section \ref{sec:dist_clust_cent}), in terms of the metric defined in (\ref{eqn:dist_clus_cent}), and (c) showing that level-wise stability leads to overall stability of the dictionary learning algorithm (Section \ref{sec:prob_space_res}).

\subsection{Level-wise Stability}
\label{sec:level_stability}
Let us define a probability space $(\mathcal{Y}_l,\mathbf{\Sigma}_l,P_l)$ where $\mathcal{Y}_l$ is the data that lies in $\mathbb{R}^M$, and $P_l$ is the probability measure. The training samples for the sub-dictionaries $\mathbf{\Psi}_l$ and $\mathbf{\Lambda}_l$ are two different sets of $T$ i.i.d. realizations from the probability space. We also assume that the $\ell_2$ norm of the training samples is bounded from above and below (i.e.), $0 < r \leq \|\mathbf{y}\|_2 \leq R < \infty$. Note that, in a general case, the data will lie in $\mathbb{R}^M$ for the first level of dictionary learning and in a finite union of lower-dimensional subspaces of $\mathbb{R}^M$ for the subsequent levels. In both cases, the following argument on stability will hold. This is because when the training data lies in a union of lower dimensional subspaces of $\mathbb{R}^M$, we can assume it to be still lying in $\mathbb{R}^M$, but assign the probabilities outside the union of subspaces to be zero.

The distortion function class for the clusterings, defined similar to (\ref{eqn:dist_func_class}), is uniform Donsker because the covering number with respect to the supremum norm grows polynomially, according to (\ref{eqn:cover_num}). When a unique minimizer exists for the clustering objective, the distortion functions corresponding to the different clusterings $\mathbf{\Psi}_l$ and $\mathbf{\Lambda}_l$ become arbitrarily close, $\|g_{\mathbf{\Psi}_l} - g_{\mathbf{\Lambda}_l}\|_{L_1(P_l)} \xrightarrow{P} 0$, even for completely disjoint training sets, as $T \rightarrow \infty$. However, in the case of multiple minimizers, $\|g_{\mathbf{\Psi}_l} - g_{\mathbf{\Lambda}_l}\|_{L_1(P_l)} \xrightarrow{P} 0$ holds only with respect to a change of $\displaystyle o(\sqrt{T})$ training samples between the two clusterings, and fails to hold for a change of $\Omega(\sqrt{T})$ samples \cite{kmeansRakhlin2007,JT_khyp}.

\subsection{Distance between Cluster Centers for a Stable Clustering}
\label{sec:dist_clust_cent}

For each cluster center in the clustering $\mathbf{\Psi}_l$, we pick the closest cluster center from $\mathbf{\Lambda}_l$, in terms of the distortion measure (\ref{eqn:clust_dist}), and form pairs. Let us indicate the $j^\mathrm{th}$ pair of cluster centers by $\boldsymbol{\psi}_{l,j}$ and $\boldsymbol{\lambda}_{l,j}$. Let us define $\tau$ disjoint sets $\{A_i\}_{i=1}^{\tau}$, in which the training data for the clusterings exist, such that $P_l(\cup_{i=1}^{\tau} A_i) = 1$. By defining such disjoint sets, we can formalize the notion of training data lying in a union of subspaces of $\mathbb{R}^M$. The intuitive fact that the cluster centers of two clusterings are close to each other, given that their distortion functions are close, is proved in the lemma below.

\begin{lemma}
\label{lem:dist_clust_cent}
Consider two sub-dictionaries (clusterings) ${\mathbf{\Psi}_l}$ and $\mathbf{\Lambda}_l$ with $K_l$ atoms each obtained using the $T$ training samples that exist in the $\tau$ disjoint sets $\{A_i\}_{i=1}^{\tau}$ in the $\mathbb{R}^M$ space, with $0 < r \leq \|\mathbf{y}\|_2 \leq R < \infty$, and $dP_l(\mathbf{y})/d\mathbf{y} > C$ in each of the sets. When the distortion functions become arbitrarily close to each other,
$\|g_{\mathbf{\Psi}_l} - g_{\mathbf{\Lambda}_l}\|_{L_1(P_l)} \xrightarrow{P} 0$ as $T \rightarrow \infty$, the smallest angle between the subspaces spanned by the cluster centers becomes arbitrarily close to zero, i.e.,
\begin{equation}
\label{eqn:dist_clust_cent_pair}
\angle(\boldsymbol{\psi}_{l,j},\boldsymbol{\lambda}_{l,j}) \xrightarrow{P} 0, \mathrm{ }, \forall \mathrm{ } j \in {1, \ldots, K_l}.
\end{equation}
\end{lemma}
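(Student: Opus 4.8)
The plan is to localize, to the pieces $\{A_i\}_{i=1}^{\tau}$, the cone argument behind Lemma~\ref{thm:dist_clus_cent}, which handles the full-dimensional case. First I would restate the hypothesis quantitatively: since $\|g_{\mathbf{\Psi}_l}-g_{\mathbf{\Lambda}_l}\|_{L_1(P_l)} \xrightarrow{P} 0$, for every $\mu>0$ and $\eta>0$ there is a $T_0$ such that for $T>T_0$ the event $\mathcal{E}_\mu=\{\|g_{\mathbf{\Psi}_l}-g_{\mathbf{\Lambda}_l}\|_{L_1(P_l)}<\mu\}$ has probability at least $1-\eta$. On $\mathcal{E}_\mu$, using $|g_{\mathbf{\Psi}_l}-g_{\mathbf{\Lambda}_l}|\ge 0$, the disjointness of the $A_i$, and $P_l(\cup_i A_i)=1$, one obtains $\int_{A_i}|g_{\mathbf{\Psi}_l}(\mathbf{y})-g_{\mathbf{\Lambda}_l}(\mathbf{y})|\,dP_l(\mathbf{y})<\mu$ for every $i$. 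It therefore suffices to establish the deterministic implication: these $\tau$ integral bounds, for small $\mu$, force $\angle(\boldsymbol{\psi}_{l,j},\boldsymbol{\lambda}_{l,j})\le F(\mu)$ for every $j$, where $F(\mu)\to 0$ as $\mu\to 0$; letting $\mu,\eta\to 0$ then gives~(\ref{eqn:dist_clust_cent_pair}).

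For the deterministic implication, fix $j$ and put $\theta_j=\angle(\boldsymbol{\psi}_{l,j},\boldsymbol{\lambda}_{l,j})$. Since $\boldsymbol{\lambda}_{l,j}$ is, by construction, the centroid of $\mathbf{\Lambda}_l$ closest to $\boldsymbol{\psi}_{l,j}$ in the distortion~(\ref{eqn:clust_dist}) --- and $d(\boldsymbol{\psi},\boldsymbol{\lambda})=\sin^2\angle(\boldsymbol{\psi},\boldsymbol{\lambda})$ for unit vectors --- every centroid of $\mathbf{\Lambda}_l$ makes an angle at least $\theta_j$ with $\boldsymbol{\psi}_{l,j}$. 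Consider the double cone $\mathcal{K}_\rho$ of half-angle $\rho=c_0\theta_j$ about $\mathrm{span}(\boldsymbol{\psi}_{l,j})$, with $c_0\in(0,1)$ a constant depending only on $r,R$, to be fixed. For $\mathbf{y}\in\mathcal{K}_\rho$ we have $g_{\mathbf{\Psi}_l}(\mathbf{y})\le\|\mathbf{y}\|_2^2\sin^2\rho\le R^2\sin^2\rho$, while the triangle inequality for the angular metric gives $\angle(\mathbf{y},\boldsymbol{\lambda})\ge\theta_j-\rho$ for every $\boldsymbol{\lambda}\in\mathbf{\Lambda}_l$, hence $g_{\mathbf{\Lambda}_l}(\mathbf{y})\ge r^2\sin^2(\theta_j-\rho)$. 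Taking $c_0$ small enough (possible for admissible $r$, as in Lemma~\ref{thm:dist_clus_cent}) that $r^2\sin^2((1-c_0)\theta_j)-R^2\sin^2(c_0\theta_j)\ge c_1\theta_j^2$ for a constant $c_1=c_1(r,R)>0$ and all small $\theta_j$, we get $|g_{\mathbf{\Psi}_l}(\mathbf{y})-g_{\mathbf{\Lambda}_l}(\mathbf{y})|\ge c_1\theta_j^2$ throughout $\mathcal{K}_\rho$.

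It remains to lower-bound $P_l(\mathcal{K}_\rho)$. The centroid $\boldsymbol{\psi}_{l,j}$ is the leading left singular vector of the data assigned to cluster $j$ under $\mathbf{\Psi}_l$; assuming this cluster carries positive $P_l$-mass --- which holds in the limit for a minimizing configuration, and to which we restrict --- a positive share of that mass lies in some piece $A_{i(j)}$. There the density $dP_l/d\mathbf{y}$ exceeds $C$ relative to Lebesgue measure on the affine hull of $A_{i(j)}$, whose dimension $M_{i(j)}$ is at most $M$, and every point has norm at least $r$; by the same annulus-intersect-cone volume estimate underlying~(\ref{eqn:rho_bound}), $\mathcal{K}_\rho$ then has $P_l$-measure at least $C'\rho^{M_{i(j)}-1}$ for a constant $C'>0$ depending only on $C,r,R,M_{i(j)}$. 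Combining the last two paragraphs, $\mu>\int_{A_{i(j)}}|g_{\mathbf{\Psi}_l}-g_{\mathbf{\Lambda}_l}|\,dP_l\ge c_1\theta_j^2\cdot C'\rho^{M_{i(j)}-1}\gtrsim\theta_j^{\,M_{i(j)}+1}$, so $\theta_j=\angle(\boldsymbol{\psi}_{l,j},\boldsymbol{\lambda}_{l,j})\lesssim\mu^{1/(M_{i(j)}+1)}\le\mu^{1/(M+1)}$ for $\mu<1$, which vanishes as $\mu\to 0$. This gives the required $F$ and finishes the argument.

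The step I expect to be the real obstacle is the one just described: guaranteeing that every cluster keeps positive serving mass as $T\to\infty$ and, more delicately, pinning down a single piece $A_{i(j)}$ that simultaneously contains $\mathrm{span}(\boldsymbol{\psi}_{l,j})$, or a direction close to it, and a positive-density cone around it. When cluster $j$ draws data from several pieces of different dimensions, $\boldsymbol{\psi}_{l,j}$ need not lie in any one affine hull; one must then take the piece carrying the largest share of the cluster's mass and invoke optimality of the K-hyperline configuration to control the angle between $\boldsymbol{\psi}_{l,j}$ and that subspace (otherwise the least-squares fit of the cluster would be strictly improvable). Everything else is a routine localization of the proof of Lemma~\ref{thm:dist_clus_cent}; it is exactly the union-of-subspaces geometry that makes the present statement go beyond the full-dimensional case.
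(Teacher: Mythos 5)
Your proposal is correct and follows essentially the same route as the paper's proof: restrict to a small-angle cone about $\boldsymbol{\psi}_{l,j}$ intersected with one of the sets $A_i$, use the pairing of centers to bound $g_{\mathbf{\Psi}_l}$ above and $g_{\mathbf{\Lambda}_l}$ below on that region, convert the $P_l$-integral to a Lebesgue integral via the density bound $C$, and conclude that $\angle(\boldsymbol{\psi}_{l,j},\boldsymbol{\lambda}_{l,j})$ must vanish; your version is simply a quantitative rendering (yielding the $\mu^{1/(M+1)}$ rate of (\ref{eqn:rho_bound})) of the paper's qualitative argument. The positive-mass issue you flag as the main obstacle is not resolved in the paper either --- its proof just works with ``any $j$ and $i$ with a non-empty $B_{l,i,j}$'' --- so your attempt is at least as complete as the published one.
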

\begin{proof}
Denote the smallest angle between the subspaces represented by $\boldsymbol{\psi}_{l,j}$ and $\mathbf{\lambda}_{l,j}$ as $\angle{(\boldsymbol{\psi}_{l,j},\mathbf{\lambda}_{l,j})} = \rho_{l,j}$ and define a region $S(\boldsymbol{\psi}_{l,j},\rho_{l,j}/2)=\{\mathbf{y}|\angle{(\boldsymbol{\psi}_{l,j},\mathbf{y})} \leq \rho_{l,j}/2, 0 < r \leq \|\mathbf{y}\|_2 \leq R < \infty\}$. If $\mathbf{y} \in S(\boldsymbol{\psi}_{l,j},\rho_{l,j}/2)$, then $\mathbf{y}^T (\mathbf{I}-\boldsymbol{\psi}_{l,j}\boldsymbol{\psi}_{l,j}^T) \mathbf{y} \leq \mathbf{y}^T (\mathbf{I}-\boldsymbol{\lambda}_{l,j}\boldsymbol{\lambda}_{l,j}^T) \mathbf{y}$. An illustration of this setup for a 2-D case is given in Figure \ref{Fig:MLD_stab1}. In this figure, the arc $\hat{\mathbf{q}}_1\hat{\mathbf{q}}_2$ is of radius $r$ and represents the minimum value of $\|\mathbf{y}\|_2$. By definition, the $L_1(P_l)$ distance between the distortion functions of the clusterings for data that exists in the disjoint sets $\{A_i\}_{i=1}^{\tau}$ is
\small
\begin{align}
& \|g_{\boldsymbol{\Psi}_l}-g_{\boldsymbol{\Lambda}_l}\|_{L_1(P_l)} = \sum_{i=1}^{\tau} \int_{A_i} |g_{\boldsymbol{\Psi}_l}(\mathbf{y}) - g_{\boldsymbol{\Lambda}_l}(\mathbf{y})| dP_{l}(\mathbf{y}).
\end{align} \normalsize For any $j$ and $i$ with a non-empty $B_{l,i,j} = S(\boldsymbol{\psi}_{l,j},\rho_{l,j}/2) \cap A_i$ we have,
\small
\begin{align}
\displaystyle{\|g_{\boldsymbol{\Psi}_l}-g_{\boldsymbol{\Lambda}_l}\|_{L_1(P_l)} \geq \int_{B_{l,i,j}} |g_{\boldsymbol{\Psi}_l}(\mathbf{y}) - g_{\boldsymbol{\Lambda}_l}(\mathbf{y})| dP_l (\mathbf{y})},
\end{align}
\begin{align}
\nonumber &= \int_{B_{l,i,j}} \big[\mathbf{y}^T \left(\mathbf{I}-\boldsymbol{\lambda}_{l,j}\boldsymbol{\lambda}_{l,j}^T\right) \mathbf{y}- \sum_{k=1}^K \mathbf{y}^T \left(\mathbf{I}-\boldsymbol{\psi}_{l,k}\boldsymbol{\psi}_{l,k}^T\right) \mathbf{y}
\label{eqn:dist_hyp_sph}\\
&\phantom{=\int_{B_{l,i,j}}} \mathbb{I} \left(\mathbf{y} \text{ closest to $\boldsymbol{\psi}_{l,k}$}\right) \big] dP_l(\mathbf{y}),
\end{align}
\begin{align}
& \geq \int_{B_{l,i,j}} \big[\mathbf{y}^T \left(\mathbf{I}-\boldsymbol{\lambda}_{l,j}\boldsymbol{\lambda}_{l,j}^T\right) \mathbf{y} - \mathbf{y}^T \left(\mathbf{I}-\boldsymbol{\psi}_{l,j}\boldsymbol{\psi}_{l,j}^T\right) \mathbf{y} \big] dP_l(\mathbf{y}),
\label{eqn:dist_hyp_sect}\\
& \geq C \int_{B_{l,i,j}} \left[\left(\mathbf{y}^T\boldsymbol{\psi}_{l,j}\right)^2-\left(\mathbf{y}^T\boldsymbol{\lambda}_{l,j}\right)^2\right] d\mathbf{y}.
\label{eqn:dist_hyp_sect1}
\end{align} \normalsize We have $g_{\boldsymbol{\Lambda}_l}(\mathbf{y}) = \mathbf{y}^T \left(\mathbf{I}-\boldsymbol{\lambda}_{l,j}\boldsymbol{\lambda}_{l,j}^T\right) \mathbf{y}$ in (\ref{eqn:dist_hyp_sph}), since $\boldsymbol{\lambda}_{l,j}$ is the closest cluster center to the data in $S(\boldsymbol{\psi}_{l,j},\rho_{l,j}/2) \cap A_i$ in terms of the distortion measure (\ref{eqn:clust_dist}). Note that $\mathbb{I}$ is the indicator function and (\ref{eqn:dist_hyp_sect1}) follows from (\ref{eqn:dist_hyp_sect}) because $dP_l(\mathbf{y})/d\mathbf{y} > C$. Since by assumption, $\|g_{\mathbf{\Psi}_l} - g_{\mathbf{\Lambda}_l}\|_{L_1(P_l)} \xrightarrow{P} 0$, from (\ref{eqn:dist_hyp_sect1}), we have
\begin{equation}
\left(\mathbf{y}^T\boldsymbol{\psi}_{l,j}\right)^2-\left(\mathbf{y}^T\boldsymbol{\lambda}_{l,j}\right)^2 \xrightarrow{P} 0,
\label{eqn:integ_zero}
\end{equation} because the integrand in (\ref{eqn:dist_hyp_sect1}) is a continuous non-negative function in the region of integration.

\begin{figure}[tb]
\begin{minipage}[b]{1.0\linewidth}
 \centering
\includegraphics[width = 6.5cm]{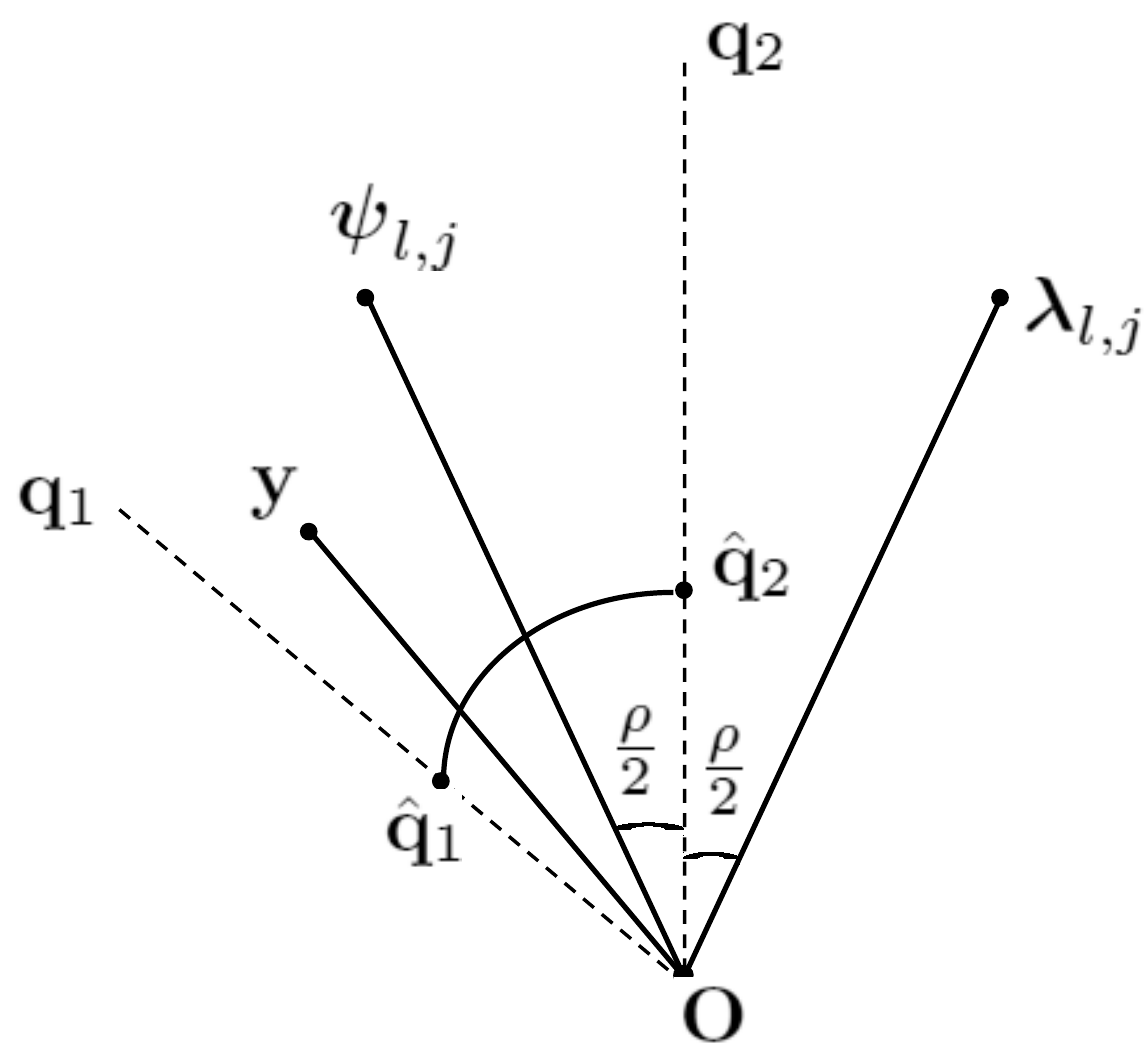}
\end{minipage}
\caption{Illustration for showing the stability of cluster centroids from the stability of distortion function.}
\label{Fig:MLD_stab1}
\end{figure}

Denoting the smallest angles between $\mathbf{y}$ and the subspaces spanned by $\boldsymbol{\psi}_{l,j}$ and $\boldsymbol{\lambda}_{l,j}$ to be $\theta_{\boldsymbol{\psi}_{l,j}}$ and $\theta_{\boldsymbol{\lambda}_{l,j}}$ respectively, from (\ref{eqn:integ_zero}), we have $\|\mathbf{y}\|_2^2(\cos^2 \theta_{\boldsymbol{\psi}_{l,j}} - \cos^2 \theta_{\boldsymbol{\lambda}_{l,j}}) \xrightarrow{P} 0$, for all $\mathbf{y}$. By definition of the region $B_{l,i,j}$, we have $\theta_{\boldsymbol{\psi}_{l,j}} \leq \theta_{\boldsymbol{\lambda}_{l,j}}$. Since $\|\mathbf{y}\|_2$ is bounded away from zero and infinity, if $(\cos^2 \theta_{\boldsymbol{\psi}_{l,j}} - \cos^2 \theta_{\boldsymbol{\lambda}_{l,j}}) \xrightarrow{P} 0$ holds for all $\mathbf{y} \in B_{l,i,j}$, then we have $\angle(\boldsymbol{\psi}_{l,j},\boldsymbol{\lambda}_{l,j}) \xrightarrow{P} 0$. This is true for all cluster center pairs as we have shown this for an arbitrary $i$ and $j$.
\end{proof}

\subsection{Stability of the MLD Algorithm}
\label{sec:prob_space_res}
The stability of the MLD algorithm as a whole, is proved in Theorem \ref{thm:stab_dict_learn} from its level-wise stability by using an induction argument. The proof will depend on the following lemma which shows that the residuals from two stable clusterings belong to the same probability space.

\begin{lemma}
\label{lem:prob_space_residuals}
When the training vectors for the sub-dictionaries (clusterings) $\mathbf{\Psi}_l$ and $\mathbf{\Lambda}_l$ are obtained from the probability space $(\mathcal{Y}_l,\mathbf{\Sigma}_l,P_l)$, and the cluster center pairs become arbitrarily close to each other as $T \rightarrow \infty$, the residual vectors from both the clusterings belong to an identical probability space $(\mathcal{Y}_{l+1},\mathbf{\Sigma}_{l+1},P_{l+1})$.
\end{lemma}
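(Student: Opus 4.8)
The plan is to show that the residual $\mathbf{r}_{l,i} = \mathbf{y} - \boldsymbol{\psi}_{l,\mathcal{H}(\mathbf{y})}(\mathbf{y}^T\boldsymbol{\psi}_{l,\mathcal{H}(\mathbf{y})})$ is obtained from the training vector $\mathbf{y}$ by applying a deterministic measurable map $F_{\boldsymbol{\Psi}_l}\colon \mathcal{Y}_l \to \mathbb{R}^M$ (namely, assign $\mathbf{y}$ to its nearest centroid in the sense of (\ref{eqn:clust_dist}), then project onto the orthogonal complement of that centroid). Since the input samples are i.i.d. draws from $(\mathcal{Y}_l,\boldsymbol{\Sigma}_l,P_l)$, the residuals are i.i.d. draws from the pushforward space $(\mathcal{Y}_{l+1},\boldsymbol{\Sigma}_{l+1},P_{l+1}^{\boldsymbol{\Psi}})$ with $P_{l+1}^{\boldsymbol{\Psi}} = P_l \circ F_{\boldsymbol{\Psi}_l}^{-1}$; likewise the residuals from $\mathbf{\Lambda}_l$ are governed by $P_{l+1}^{\boldsymbol{\Lambda}} = P_l \circ F_{\boldsymbol{\Lambda}_l}^{-1}$. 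So the whole content of the lemma reduces to showing $P_{l+1}^{\boldsymbol{\Psi}} = P_{l+1}^{\boldsymbol{\Lambda}}$ in the limit $T \rightarrow \infty$, i.e.\ that the two pushforward measures coincide when the centroid pairs coincide.

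First I would set up the partition of $\mathcal{Y}_l$ (up to a $P_l$-null set) into the Voronoi-type cells $V_j^{\boldsymbol{\Psi}} = \{\mathbf{y} : j = \argmax_k |\mathbf{y}^T\boldsymbol{\psi}_{l,k}|\}$ and similarly $V_j^{\boldsymbol{\Lambda}}$, pairing $\boldsymbol{\psi}_{l,j}$ with its nearest $\boldsymbol{\lambda}_{l,j}$ as in Lemma \ref{lem:dist_clust_cent}. On $V_j^{\boldsymbol{\Psi}}$ the map is $\mathbf{y}\mapsto (\mathbf{I}-\boldsymbol{\psi}_{l,j}\boldsymbol{\psi}_{l,j}^T)\mathbf{y}$, a linear projection depending continuously on $\boldsymbol{\psi}_{l,j}$. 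The key step is then a two-part estimate: (i) by Lemma \ref{lem:dist_clust_cent}, $\angle(\boldsymbol{\psi}_{l,j},\boldsymbol{\lambda}_{l,j}) \xrightarrow{P} 0$, so the projectors $(\mathbf{I}-\boldsymbol{\psi}_{l,j}\boldsymbol{\psi}_{l,j}^T)$ and $(\mathbf{I}-\boldsymbol{\lambda}_{l,j}\boldsymbol{\lambda}_{l,j}^T)$ converge in operator norm, making the two residual maps uniformly close on the region where the assignments agree; (ii) the symmetric difference $V_j^{\boldsymbol{\Psi}} \triangle V_j^{\boldsymbol{\Lambda}}$ has $P_l$-measure tending to zero, because a point can switch cells only if it lies in the thin region where two correlations $|\mathbf{y}^T\boldsymbol{\psi}_{l,k}|$ are within $O(\angle(\boldsymbol{\psi},\boldsymbol{\lambda}))$ of each other, and the boundedness of $dP_l/d\mathbf{y}$ together with $r \le \|\mathbf{y}\|_2 \le R$ bounds the measure of such a sliver. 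Combining (i) and (ii) via a change-of-variables / bounded-test-function argument shows $\int \phi\, dP_{l+1}^{\boldsymbol{\Psi}} - \int \phi\, dP_{l+1}^{\boldsymbol{\Lambda}} \to 0$ for every bounded continuous $\phi$, hence $P_{l+1}^{\boldsymbol{\Psi}} \xrightarrow{} P_{l+1}^{\boldsymbol{\Lambda}}$, and I would identify this common limiting law as $(\mathcal{Y}_{l+1},\boldsymbol{\Sigma}_{l+1},P_{l+1})$. I would also note, as already observed in Section \ref{sec:multilevel_dict_learning}, that the support of $P_{l+1}$ lies in the finite union of the $(M-1)$-dimensional subspaces orthogonal to the atoms, and the discarding of vectors with $\|\mathbf{r}_{l,i}\|_2^2 \le \epsilon$ (the set $\Lambda_l$) only conditions $P_{l+1}$ on a fixed measurable event, which is the same event in the limit for both clusterings.

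The main obstacle I anticipate is step (ii): controlling the measure of the ``boundary sliver'' where the cluster assignments of the two clusterings disagree. Unlike the single-centroid estimate in Lemma \ref{lem:dist_clust_cent}, here one must handle the interaction among all $K_l$ centroids simultaneously and argue that the set of $\mathbf{y}$ whose nearest-centroid index differs between $\boldsymbol{\Psi}_l$ and $\boldsymbol{\Lambda}_l$ shrinks in probability. This requires either a genericity/margin assumption ensuring that the decision boundaries of $\boldsymbol{\Psi}_l$ themselves carry zero $P_l$-measure with a controlled neighborhood growth, or an argument that pushes the $L_1(P_l)$-closeness of the distortion functions directly through to closeness of the induced partitions — essentially the same mechanism that makes $\|g_{\boldsymbol{\Psi}_l} - g_{\boldsymbol{\Lambda}_l}\|_{L_1(P_l)}$ small forces the regions of disagreement (where the two distortion functions differ by a non-vanishing amount) to be $P_l$-small. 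I would lean on the uniform Donsker property and the density lower bound $dP_l/d\mathbf{y} > C$ exactly as in the proof of Lemma \ref{lem:dist_clust_cent} to make this rigorous, treating each of the $\tau$ disjoint sets $A_i$ separately so the union-of-subspaces structure is preserved.
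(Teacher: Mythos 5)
Your proposal is correct and rests on the same core idea as the paper's proof: the residual distribution is the pushforward of $P_l$ under the assign-and-project map, and the convergence of the centroid pairs (Lemma \ref{lem:dist_clust_cent}) forces the two pushforward measures to coincide as $T \rightarrow \infty$. Where you differ is in the bookkeeping. The paper never decomposes $\mathcal{Y}_l$ into the Voronoi-type cells at all: it fixes a pair $(\boldsymbol{\psi}_{l,j},\boldsymbol{\lambda}_{l,j})$, forms the differential preimage sets $D_{\boldsymbol{\psi}_{l,j}} = \{\bar{\mathbf{\Psi}}_{l,j}(\boldsymbol{\beta}+d\boldsymbol{\beta})+\boldsymbol{\psi}_{l,j}\alpha\}$ and $D_{\boldsymbol{\lambda}_{l,j}}$, identifies $P_l(D_{\boldsymbol{\psi}_{l,j}})$ with the probability that the residual lands in $\{\bar{\mathbf{\Psi}}_{l,j}(\boldsymbol{\beta}+d\boldsymbol{\beta})\}$, and argues that the symmetric difference of the two $D$-sets tends to the null set as $\angle(\boldsymbol{\psi}_{l,j},\boldsymbol{\lambda}_{l,j}) \xrightarrow{P} 0$, giving (\ref{eqn:prob_space_res}) for each differential residual neighborhood. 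In particular, the issue you flag as the main obstacle --- your step (ii), controlling the $P_l$-measure of the region where the two clusterings assign a point to different atoms --- is passed over silently in the published proof, which implicitly treats each cluster's preimage as the full slab over all $\alpha$ rather than its intersection with the corresponding assignment cell. So your route is the more demanding one: the test-function/weak-convergence formulation is cleaner and actually confronts the boundary-sliver problem, but it requires an extra estimate (disagreement forces two correlations $|\mathbf{y}^T\boldsymbol{\psi}_{l,k}|$ to be within $O(\angle)$ of each other, and the measure of that sliver must vanish), and note that the stated hypothesis $dP_l/d\mathbf{y} > C$ is a lower bound only, so this step needs absolute continuity (or an upper density bound) on each $A_i$ rather than the stated assumption; the paper's shorter differential-set argument buys brevity and a direct local identification of $P_{l+1}$, at the cost of leaving exactly that gap unaddressed.
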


\begin{proof}
For the $j^\mathrm{th}$ cluster center pair $\boldsymbol{\psi}_{l,j}$, $\boldsymbol{\lambda}_{l,j}$, define $\bar{\mathbf{\Psi}}_{l,j}$ and $\bar{\mathbf{\Lambda}}_{l,j}$ as the projection matrices for their respective orthogonal complement subspaces $\boldsymbol{\psi}_{l,j}^{\perp}$ and $\boldsymbol{\lambda}_{l,j}^{\perp}$. Define the sets $D_{\boldsymbol{\psi}_{l,j}} = \{\mathbf{y} \in \bar{\mathbf{\Psi}}_{l,j}(\boldsymbol{\beta}+d\boldsymbol{\beta})+\boldsymbol{\psi}_{l,j}\alpha\}$ and $D_{\boldsymbol{\lambda}_{l,j}} = \{ \mathbf{y} \in \bar{\mathbf{\Lambda}}_{l,j}(\boldsymbol{\beta}+d\boldsymbol{\beta})+\boldsymbol{\lambda}_{l,j}\alpha\}$, where $-\infty < \alpha < \infty$, $\boldsymbol{\beta}$ is an arbitrary fixed vector, not orthogonal to both $\boldsymbol{\psi}_{l,j}$ and $\boldsymbol{\lambda}_{l,j}$, and $d\boldsymbol{\beta}$ is a differential element. The residual vector set for the cluster $\boldsymbol{\psi}_{l,j}$, when $\mathbf{y} \in D_{\boldsymbol{\psi}_{l,j}}$ is given by, $\mathbf{r}_{\boldsymbol{\psi}_{l,j}} \in \{\bar{\mathbf{\Psi}}_{l,j} \mathbf{y} | \mathbf{y} \in D_{\boldsymbol{\psi}_{l,j}}\}$, or equivalently $\mathbf{r}_{\boldsymbol{\psi}_{l,j}} \in \{\bar{\mathbf{\Psi}}_{l,j} (\boldsymbol{\beta}+d\boldsymbol{\beta})\}$. Similarly for the cluster $\boldsymbol{\lambda}_{l,j}$, we have $\mathbf{r}_{\boldsymbol{\lambda}_{l,j}} \in \{\bar{\mathbf{\Lambda}}_{l,j} (\boldsymbol{\beta}+d\boldsymbol{\beta})\}$. For a 2-D case, Figure \ref{Fig:MLD_stab2} shows the 1-D subspace $\boldsymbol{\psi}_{l,j}$, its orthogonal complement $\boldsymbol{\psi}_{l,j}^{\perp}$, the set $D_{\boldsymbol{\psi}_{l,j}}$ and the residual set $\{\bar{\mathbf{\Psi}}_{l,j} (\boldsymbol{\beta}+d\boldsymbol{\beta})\}$.

In terms of probabilities, we also have that $P_l(\mathbf{y} \in D_{\boldsymbol{\psi}_{l,j}}) = P_{l+1}(\mathbf{r}_{\boldsymbol{\psi}_{l,j}} \in \{\bar{\mathbf{\Psi}}_{l,j} (\boldsymbol{\beta}+d\boldsymbol{\beta})\})$, because the residual set $\{\bar{\mathbf{\Psi}}_{l,j} (\boldsymbol{\beta}+d\boldsymbol{\beta})\}$ is obtained by a linear transformation of $D_{\boldsymbol{\psi}_{l,j}}$. Here $P_l$ and $P_{l+1}$ are probability measures defined on the training data for levels $l$ and $l+1$ respectively. Similarly, $P_l(\mathbf{y} \in D_{\boldsymbol{\lambda}_{l,j}}) = P_{l+1}(\mathbf{r}_{\boldsymbol{\lambda}_{l,j}} \in \{\bar{\mathbf{\Lambda}}_{l,j} (\boldsymbol{\beta}+d\boldsymbol{\beta})\})$. When $T \rightarrow \infty$, the cluster center pairs become arbitrarily close to each other, i.e., $\angle(\boldsymbol{\psi}_{l,j}, \boldsymbol{\lambda}_{l,j}) \xrightarrow{P} 0$, by assumption. Therefore, the symmetric difference between the sets $D_{\boldsymbol{\psi}_{l,j}}$ and $D_{\boldsymbol{\lambda}_{l,j}}$ approaches the null set, which implies that $P_l(\mathbf{y} \in D_{\boldsymbol{\psi}_{l,j}})-P_l(\mathbf{y} \in D_{\boldsymbol{\lambda}_{l,j}}) \rightarrow 0$. This implies,
\begin{align}
\label{eqn:prob_space_res}
\nonumber P_{l+1}(\mathbf{r}_{\boldsymbol{\psi}_{l,j}} & \in \{\bar{\mathbf{\Psi}}_{l,j} (\boldsymbol{\beta}+d\boldsymbol{\beta})\})-\\
& P_{l+1}(\mathbf{r}_{\boldsymbol{\lambda}_{l,j}} \in \{\bar{\mathbf{\Lambda}}_{l,j} (\boldsymbol{\beta}+d\boldsymbol{\beta})\}) \rightarrow 0,
\end{align} for an arbitrary $\boldsymbol{\beta}$ and $d\boldsymbol{\beta}$, as $T \rightarrow \infty$. This means that the residuals of $\boldsymbol{\psi}_{l,j}$ and $\boldsymbol{\lambda}_{l,j}$ belong to a unique but identical probability space. Since we proved this for an arbitrary $l$ and $j$, we can say that the residuals of clusterings $\mathbf{\Psi}_l$ and $\mathbf{\Lambda}_l$ belong to an identical probability space given by $(\mathcal{Y}_{l+1},\mathbf{\Sigma}_{l+1},P_{l+1})$.
\end{proof}

\begin{figure}[tb]
\begin{minipage}[b]{1.0\linewidth}
 \centering
 \centerline{\includegraphics[width = 7.5cm]{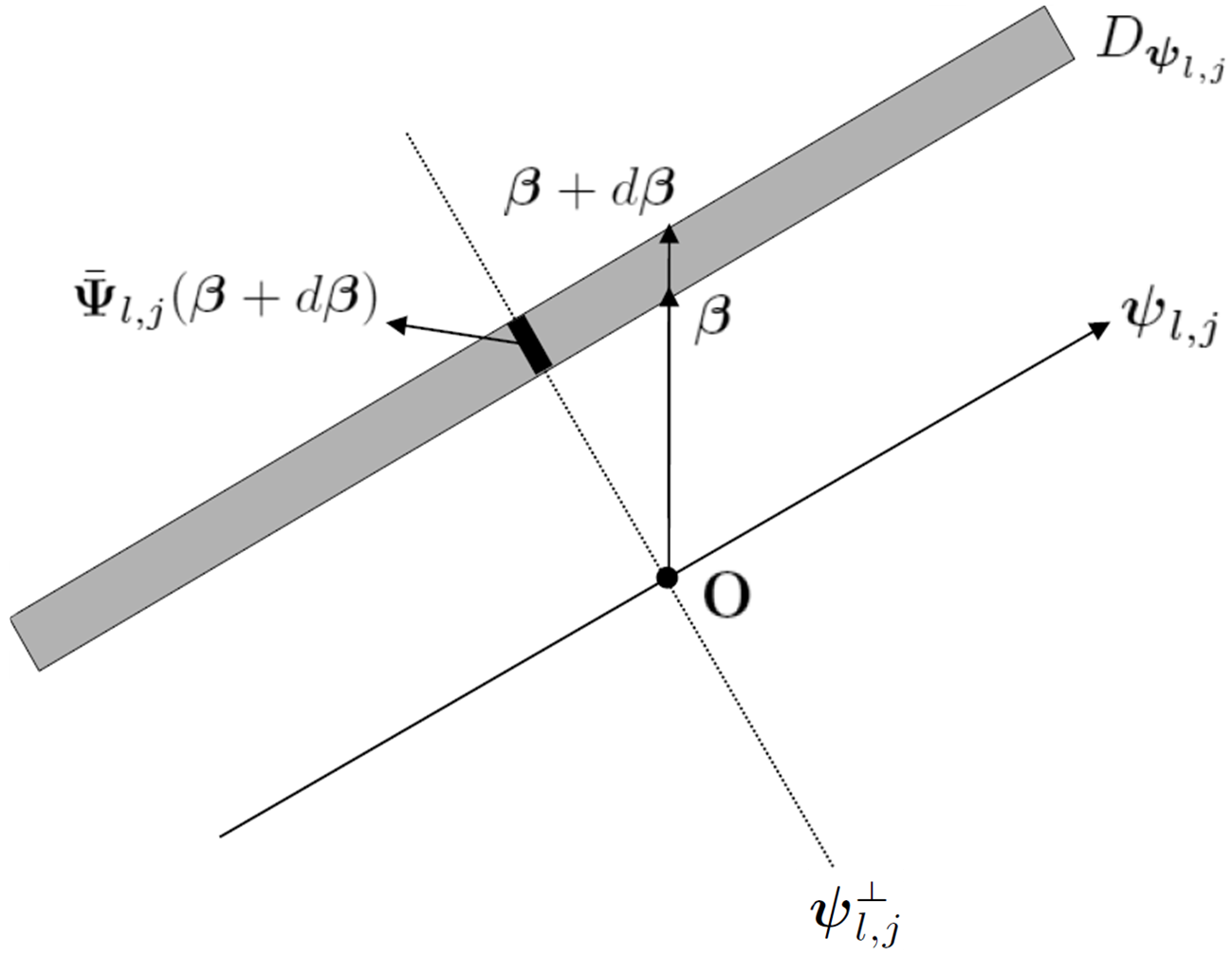}}
\end{minipage}
\caption{The residual set $\{\bar{\mathbf{\Psi}}_{l,j} (\boldsymbol{\beta}+d\boldsymbol{\beta})\}$, for the 1-D subspace $\boldsymbol{\psi}_{l,j}$, lying in its orthogonal complement subspace $\boldsymbol{\psi}_{l,j}^{\perp}$.}
\label{Fig:MLD_stab2}
\end{figure}

\begin{theorem}
\label{thm:stab_dict_learn}
Given that the training vectors for the first level are generated from the probability space $(\mathcal{Y}_1,\mathbf{\Sigma}_1,P_1)$, and the norms of training vectors for each level are bounded as $0 < r \leq \|\mathbf{y}\|_2 \leq R < \infty$, the MLD learning algorithm is stable as a whole.
\end{theorem}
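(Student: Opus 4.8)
The natural approach is induction on the level index $l$. The induction hypothesis I would carry is: the two training matrices feeding level $l$ (namely $\mathbf{R}_{l-1}$ for $\mathbf{\Psi}$ and its counterpart for $\mathbf{\Lambda}$) are i.i.d.\ realizations from one common probability space $(\mathcal{Y}_l,\mathbf{\Sigma}_l,P_l)$ whose support is a finite union of subspaces $\{A_i\}_{i=1}^{\tau}$ of $\mathbb{R}^M$ with $0<r\le\|\mathbf{y}\|_2\le R<\infty$ and $dP_l(\mathbf{y})/d\mathbf{y}>C$ on each $A_i$. The base case $l=1$ is exactly the hypothesis of the theorem, with $\tau=1$ and $A_1=\{\mathbf{y}:r\le\|\mathbf{y}\|_2\le R\}$.

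For the inductive step, fix $l$ and assume the hypothesis. The distortion class for level $l$ is uniform Donsker since its covering number obeys~(\ref{eqn:cover_num}), so the K-hyperline stability result recalled in Section~\ref{sec:level_stability} gives $\|g_{\mathbf{\Psi}_l}-g_{\mathbf{\Lambda}_l}\|_{L_1(P_l)}\xrightarrow{P}0$ as $T\to\infty$ --- without restriction on the two training sets when the level-$l$ objective has a unique minimizer, and otherwise under a change of only $o(\sqrt{T})$ samples between them. Lemma~\ref{lem:dist_clust_cent} then converts this into $\angle(\boldsymbol{\psi}_{l,j},\boldsymbol{\lambda}_{l,j})\xrightarrow{P}0$ for each matched pair $j$, i.e.\ the two sub-dictionaries of level $l$ coincide in the limit. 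Lemma~\ref{lem:prob_space_residuals} then shows the residuals of the two clusterings populate a single probability space $(\mathcal{Y}_{l+1},\mathbf{\Sigma}_{l+1},P_{l+1})$; its support is a union of the orthogonal complements of the level-$l$ atoms (refining $\{A_i\}$), the upper bound $R$ survives because a residual is an orthogonal projection, and the lower bound survives because the algorithm forwards to level $l+1$ only residuals with $\|\mathbf{r}\|_2^2>\epsilon$ (so one may take $r=\sqrt{\epsilon}$ there). This is precisely the induction hypothesis at level $l+1$. Since $L$ is finite, iterating gives $\angle(\boldsymbol{\psi}_{l,j},\boldsymbol{\lambda}_{l,j})\xrightarrow{P}0$ for every level and every matched pair, hence $\Delta(\mathbf{\Psi}_l,\mathbf{\Lambda}_l)\xrightarrow{P}0$ for each $l$; taking the maximum over the $L$ levels gives $\Delta(\mathbf{\Psi},\mathbf{\Lambda})\xrightarrow{P}0$, which is the asserted stability of the MLD as a whole.

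The step I expect to be the real obstacle is making the inductive passage through the residual map rigorous. Two issues arise. First, the residual of $\mathbf{y}_i$ at level $l$ depends on the entire clustering $\mathbf{\Psi}_l$, hence on all the training samples, so the level-$(l+1)$ residuals are not literally independent; the way out is that, by stability, $\mathbf{\Psi}_l$ converges in probability to the population-optimal clustering, so the residual map converges to a fixed measurable map under which images of i.i.d.\ inputs are asymptotically i.i.d.\ --- this is what Lemma~\ref{lem:prob_space_residuals} is tacitly using. Second, in the multiple-minimizer regime the level-$(l+1)$ conclusion needs the two residual training sets to differ in only $o(\sqrt{T})$ samples, yet closeness of $\mathbf{\Psi}_l$ and $\mathbf{\Lambda}_l$ only makes the two residual clouds close \emph{pointwise} rather than \emph{coincident}; one must argue that after coupling the level-$l$ inputs the residuals may be treated as agreeing up to a vanishing fraction, and that the $\epsilon$-threshold in the algorithm reassigns only $o(\sqrt{T})$ indices (a boundary-measure estimate on the slab where $\|\mathbf{r}\|_2^2\approx\epsilon$, which is where the density bound $dP_l/d\mathbf{y}>C$ is genuinely used). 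If one is content with the weaker dichotomy stated in Section~\ref{sec:stability}-C --- disjoint sets permitted when every level has a unique minimizer, $o(\sqrt{T})$ changes permitted in general --- these observations can be absorbed into the induction hypothesis itself rather than reproved at each level.
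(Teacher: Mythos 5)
Your proof follows essentially the same route as the paper's: an induction over levels that combines the level-wise K-hyperline stability of Section~\ref{sec:level_stability}, Lemma~\ref{lem:dist_clust_cent} to pass from closeness of distortion functions to closeness of the atoms, and Lemma~\ref{lem:prob_space_residuals} to transfer the common-probability-space hypothesis to the residuals feeding level $l+1$, with the same caveat in the multiple-minimizer case ($o(\sqrt{T})$ changes tolerated, $\Omega(\sqrt{T})$ not). The technical obstacles you flag --- the non-independence of the residuals and the propagation of the $o(\sqrt{T})$ condition through the residual map and the $\epsilon$-threshold --- are genuine points that the paper's proof passes over silently, so your account is, if anything, the more careful one.
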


\begin{proof}
The level-wise stability of MLD was shown in Section \ref{sec:level_stability}, for two cases: (a) when a unique minimizer exists for the distortion function and (b) when a unique minimizer does not exist. Lemma \ref{lem:dist_clust_cent} proved that the stability in terms of closeness of distortion functions implied stability in terms of learned cluster centers. For showing the level-wise stability, we assumed that the training vectors in level $l$ for clusterings $\mathbf{\Psi}_l$ and $\mathbf{\Lambda}_l$ belonged to the same probability space. However, when learning the dictionary, this is true only for the first level, as we supply the algorithm with training vectors from the probability space $(\mathcal{Y}_1,\mathbf{\Sigma}_1,P_1)$.

We note that the training vectors for level $l+1$ are residuals of the clusterings $\mathbf{\Psi}_l$ and $\mathbf{\Lambda}_l$. Lemma \ref{lem:prob_space_residuals} showed that the residuals of level $l$ for both the clusterings belong to an identical probability space $(\mathcal{Y}_{l+1},\mathbf{\Sigma}_{l+1},P_{l+1})$, given that the training vectors of level $l$ are realizations from the probability space $(\mathcal{Y}_l,\mathbf{\Sigma}_l,P_l)$ and $T \rightarrow \infty$. By induction, this along with the fact that the training vectors for level $1$ belong to the same probability space $(\mathcal{Y}_1,\mathbf{\Sigma}_1,P_1)$, shows that all the training vectors of both the dictionaries for any level $l$ indeed belong to a probability space $(\mathcal{Y}_{l},\mathbf{\Sigma}_{l},P_{l})$ corresponding to that level. Hence all the levels of the dictionary learning are stable and the MLD learning is stable as a whole. Similar to K-hyperline clustering, if there are multiple minimizers in at least one level, the algorithm is stable only with respect to a change of $o(\sqrt{T})$ training samples between the two clusterings and failts to hold for a change of $\Omega(\sqrt{T})$ samples.
\end{proof}


\begin{figure}[t]
\begin{minipage}[c]{1.0\linewidth}
 \centering
 \includegraphics[width = 8.5cm]{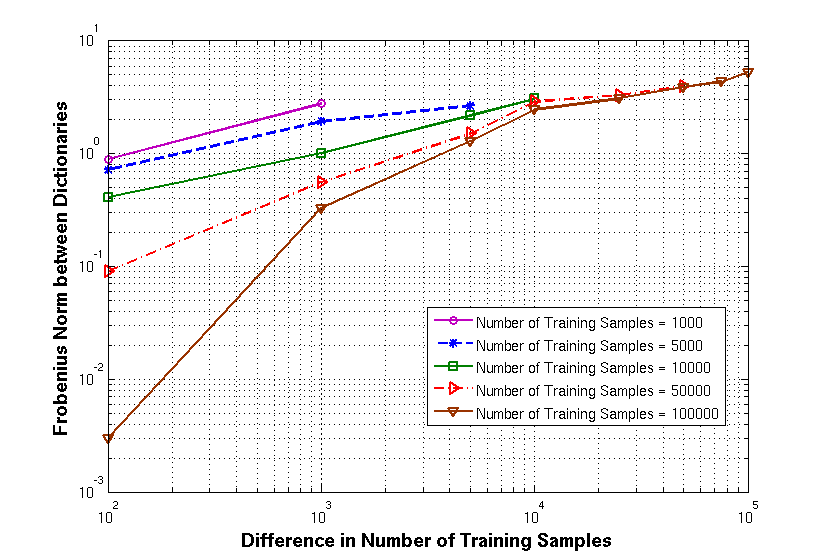}
\end{minipage}
\caption{Demonstration of the stability behavior of the proposed MLD learning algorithm. The minimum Frobenius norm between difference of two dictionaries with respect to permutation of their columns and signs is shown. The second dictionary is obtained by replacing different number of samples in the training set, used for training the original dictionary, with new data samples.}
\label{Fig:stability}
\end{figure}

\subsection{Generalization Analysis}
\label{sec:gen_proof}
Since our learning algorithm consists of multiple levels, and cannot be expressed as an ERM on a whole, the algorithm can be said to generalize asymptotically if the sum of empirical errors for all levels converge to the sum of expected errors, as $T \rightarrow \infty$. This can be expressed as 
\begin{equation}
\label{eqn:gen_eqn}
\left| \frac{1}{T} \sum_{l=1}^L \sum_{i=1}^T g_{\mathbf{\Psi}_l}(\mathbf{y}_{l,i}) - \sum_{l=1}^L \mathbb{E}_{P_l}[g_{\mathbf{\Psi}_l}]\right| \xrightarrow{P} 0,
\end{equation} where the training samples for level $l$ given by $\{\mathbf{y}_{l,i}\}_{i=1}^T$ are obtained from the probability space $(\mathcal{Y}_l,\mathbf{\Sigma}_l,P_l)$. When (\ref{eqn:gen_eqn}) holds and the learning algorithm generalizes, it can be seen that the expected error for test data which is drawn from the same probability space as that of the training data, is close to the average empirical error. Therefore, when the cluster centers for each level are obtained by minimizing the empirical error, the expected test error will also be small. 

In order to show that (\ref{eqn:gen_eqn}) holds, we use the fact that each level of MLD learning is obtained using K-hyperline clustering. Hence, from (\ref{eqn:donsker}), the average empirical distortion in each level converges to the expected distortion as $T\rightarrow\infty$, 
\begin{equation}
\label{eqn:gen_eqn1}
\left| \frac{1}{T} \sum_{i=1}^T g_{\mathbf{\Psi}_l}(\mathbf{y}_{l,i}) - \mathbb{E}_{P_l}[g_{\mathbf{\Psi}_l}]\right| \xrightarrow{P} 0.
\end{equation} The validity of the condition in (\ref{eqn:gen_eqn}) follows directly from the triangle inequality,
\begin{align}
\nonumber &\left| \frac{1}{T} \sum_{l=1}^L \sum_{i=1}^T g_{\mathbf{\Psi}_l}(\mathbf{y}_{l,i}) - \sum_{l=1}^L \mathbb{E}_{P_l}[g_{\mathbf{\Psi}_l}]\right| \\
&\leq \sum_{l=1}^L \left| \frac{1}{T}  \sum_{i=1}^T g_{\mathbf{\Psi}_l}(\mathbf{y}_{l,i}) - \mathbb{E}_{P_l}[g_{\mathbf{\Psi}_l}]\right|.
\label{eqn:gen_eqn2}
\end{align} 

\begin{figure}
\begin{minipage}[c]{1.0\linewidth}
 \centering
 \includegraphics[width =7cm]{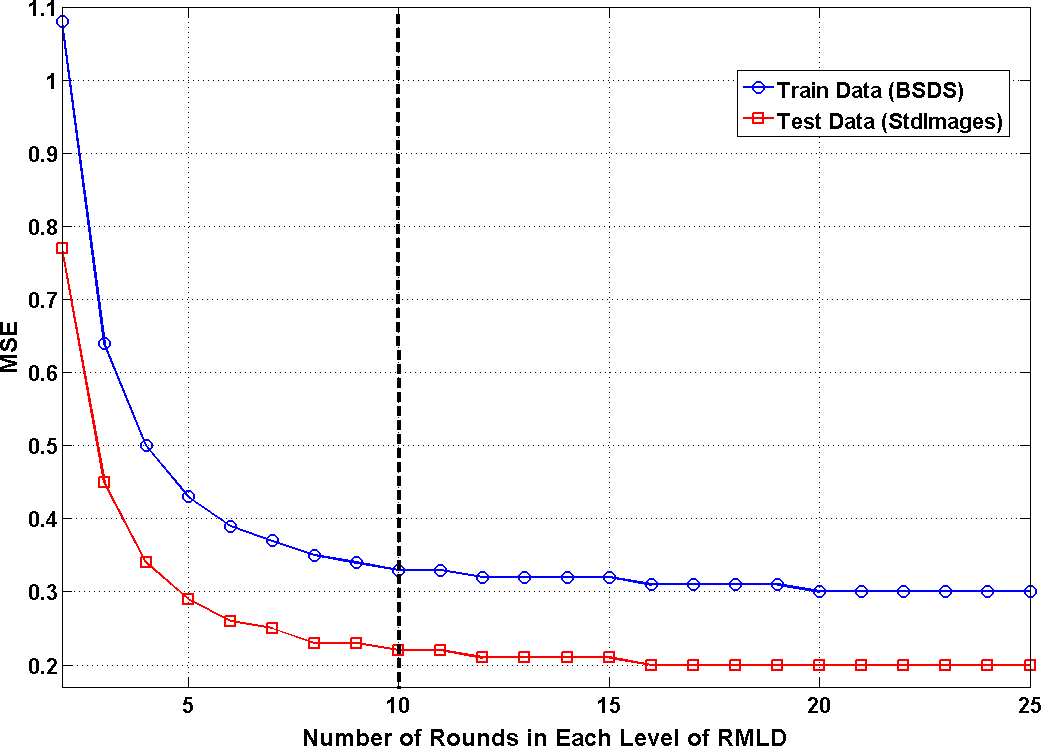}
\end{minipage}
\caption{Choosing the number of rounds ($R$) in RMLD learning. In this demonstration, RMLD design was carried out using $100,000$ samples and we observed that beyond $10$, both the train MSE and the test MSE do not change significantly.}
\label{Fig:rmld_rounds}
\end{figure}

If the \textit{MulP} coding scheme is used for test data, and the training and test data for level $1$ are obtained from the probability space $(\mathcal{Y}_1,\mathbf{\Sigma}_1,P_1)$, the probability space for both training and test data in level $l$ will be $(\mathcal{Y}_l,\mathbf{\Sigma}_l,P_l)$. This is because, both the \textit{MulP} coding scheme and MLD learning associate the data to a dictionary atom using the maximum absolute correlation measure and create a residual that is orthogonal to the atom chosen in a level. Hence, the assumption that training and test data are drawn from the same probability space in all levels hold and the expected test error will be similar to the average empirical training error.

\section{Simulation Results}
\label{sec:exp}
In this section, we present experiments to demonstrate the stability and generalization characteristics of a multilevel dictionary, and evaluate its use in compressed recovery of images and subspace learning. Both stability and generalization are crucial for building effective global dictionaries that can model patterns in any novel test image. Although it is not possible to demonstrate the asymptotic behavior experimentally, we study the changes in the behavior of the learning algorithm with increase in the number of samples used for training. Compressed recovery is a highly relevant application for global dictionaries, since it is not possible to infer dictionaries with good reconstructive power directly from the low-dimensional random measurements of image patches. It is typical to employ both $\ell_1$ minimization and greedy pursuit methods for recovering images from their compressed measurements. Though $\ell_1$ minimization incurs higher computational complexity, it often provides improved recovery performance when compared to greedy approaches. Hence, it is important to compare its recovery performance to that of the MLD that uses a simple greedy pursuit. Subspace learning is another application that can benefit from the use of multilevel dictionaries. In subspace learning, it is common to obtain a linear embedding from the training data, and apply it to novel test data for dimensionality reduction, classification, and visualization. These approaches can be unsupervised (eg. Principal Component Analysis, Locality Preserving Projections) or can use the class label information while learning the embedding (eg. Linear Discriminant Analysis, Local Discriminant Embedding). Several subspace learning algorithms can be unified under the framework of graph embedding \cite{yan2007graph}, wherein an undirected graph describing the relation between the data samples is provided as the input. We propose to use graphs constructed based on sparse codes, from a multilevel dictionary, for subspace learning in both supervised and unsupervised settings.

\begin{figure}
\begin{minipage}[c]{1.0\linewidth}
 \centering
 \includegraphics[width =7cm]{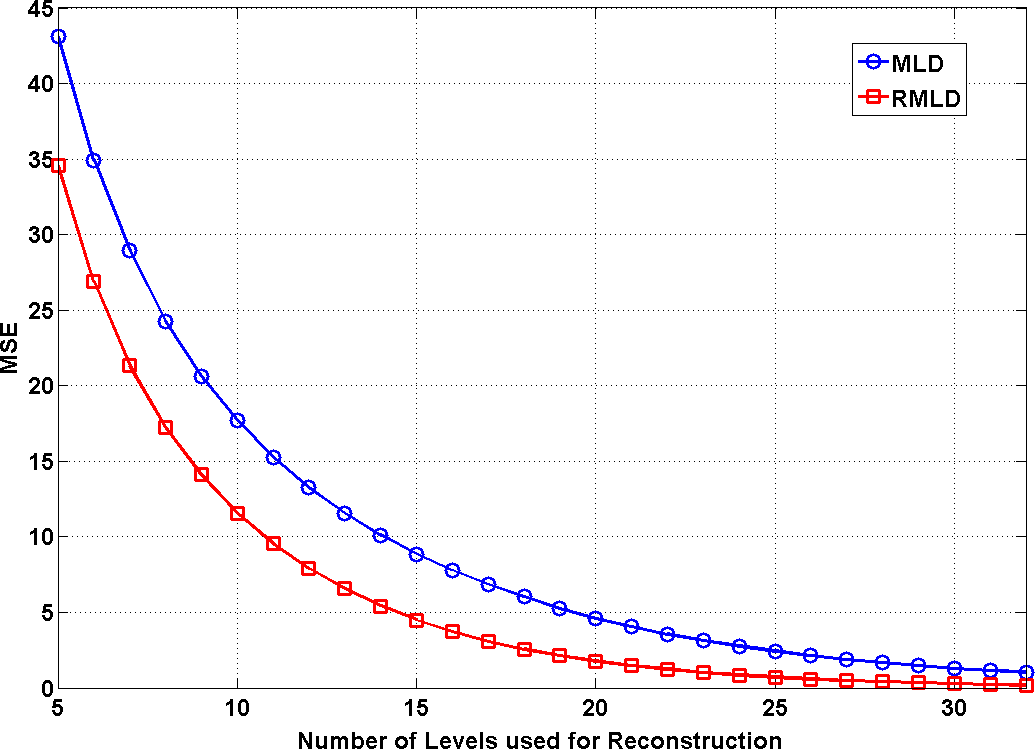}
\end{minipage}
\caption{Reconstruction of novel test data using MLD and RMLD dictionaries for the case $T = 100,000$. The approximation error is plotted against the number of levels used for reconstruction with both the dictionaries.}
\label{Fig:levmse}
\end{figure}

All simulations for stability/generalization, and compressed recovery use dictionaries trained on image patches from the Berkeley Segmentation Dataset (BSDS) \cite{BSDS_dataset}. The BSDS dataset contains a total of $400$ images and the number of patches used in our experiments vary between $5000$ and $400,000$. The images were converted to grayscale and no other preprocessing was performed on these images. We used patches of size $8 \times 8$ and no noise was added to the patches. For evaluating the performance of the dictionaries, we considered $8$ standard images (\textit{Barbara, Boat, House, Lena, Couple, Fingerprint, Man, Peppers}). For the subspace learning simulations, we used the Forest Covertype dataset \cite{forest} which consists of $581,012$ samples belonging to $7$ different classes. As per the standard procedure, we used the first $15,120$ samples ($2160$ per class) for training and the rest for testing.

\subsection{Stability}
\label{sec:stab}
In order to illustrate the stability characteristics of MLD learning, we setup an experiment where we consider a multilevel dictionary of $4$ levels, with $8$ atoms in each level. We trained multilevel dictionaries using different number of training patches $T$.  As we showed in Section \ref{sec:stability}, asymptotic stability is guaranteed when the training set is changed by not more than $o(\sqrt{T})$ samples. The inferred dictionary atoms will not vary significantly, if this condition is satisfied. We fixed the size of the training set at different values $T = \{$1000, 5000, 10,000, 50,000, 100,000$\}$ and learned an initial set of dictionaries using the proposed algorithm. The second set of dictionaries were obtained by replacing different number of samples from the original training set. For each case of $T$, the number of replaced samples was varied between $100$ and $T$. For example, when $T = 10,000$, the number of replaced training samples were $100, 1000, 5000,$ and $10,000$. The amount of change between the initial and the second set of dictionaries was quantified using the minimum Frobenius norm of their difference with respect to permutations of their columns and sign changes. In Figure \ref{Fig:stability}, we plot this quantity for different values of $T$ as a function of the number of samples replaced in the training set. For each case of $T$, the difference between the dictionaries increases as we increase the replaced number of training samples. Furthermore, for a fixed number of replaced samples (say $100$), the difference reduces with the increase in the number of training samples, since it becomes closer to asymptotic behavior.

\begin{figure}
\begin{minipage}[c]{1.0\linewidth}
 \centering
 \includegraphics[width =7.5cm]{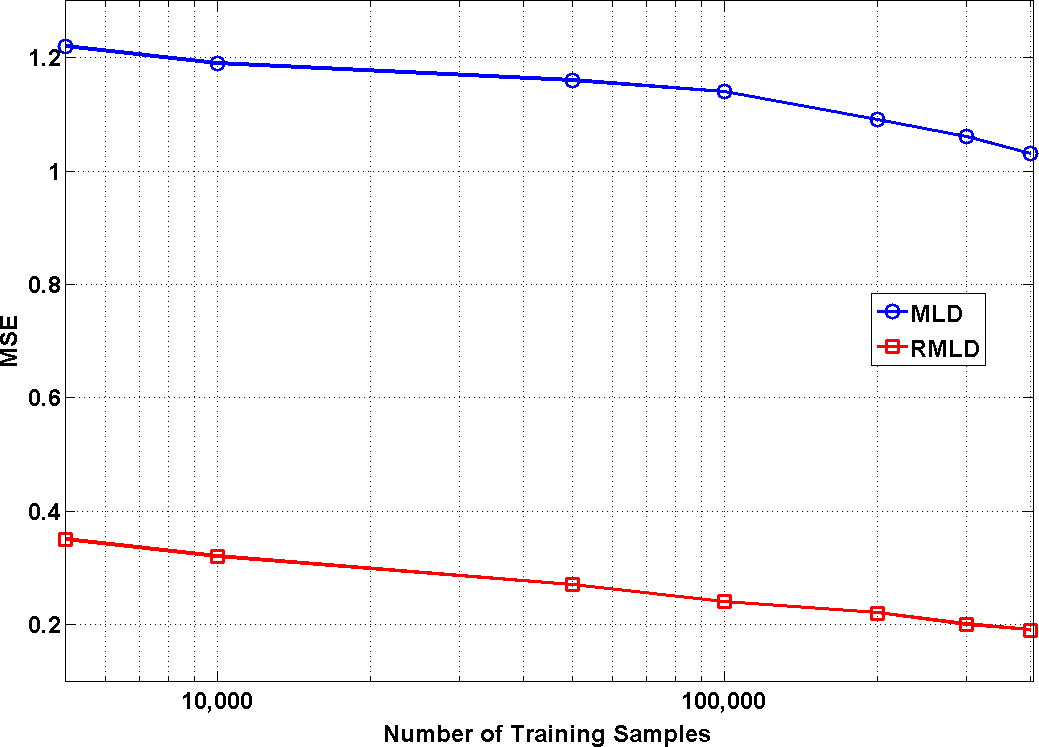}
\end{minipage}
\caption{Demonstration of the generalization characteristics of the proposed MLD and RMLD algorithms. We plot the MSE obtained by representing patches from the test dataset, using dictionaries learned with different number of training patches.}
\label{Fig:generalize}
\end{figure}

\subsection{Generalization}
\label{sec:gen}
Generalization of a dictionary learning algorithm guarantees a small approximation error for a test data sample, if the training samples are well approximated by the dictionary. In order to demonstrate the generalization characteristics of MLD learning, we designed dictionaries using different number of training image patches, of size $8 \times 8$, and evaluated the sparse approximation error for patches in the test dataset. The test dataset consisted of $120,000$ patches chosen randomly from the $8$ standard images. For multilevel learning, we fixed the number of levels at $32$, and used the approach proposed in Section \ref{sec:mdl_lev_est} to estimate the number of atoms needed in each level ($\alpha = 0.5$). Similarly, we fixed the number of levels at $32$ for the RMLD learning. Since RMLD learning does not require careful choice of the number of atoms in each level, we fixed $K_\ell = 32$. Though learning multiple sets of atoms in each level can lead to improved generalization, the benefit seems to level off after a certain number of rounds. As an example, let us consider the case where $T = 100,000$ and vary the number of rounds in RMLD between $2$ and $25$. As described in Section \ref{sec:robust_MLD}, increasing the number of rounds results in higher computational complexity while evaluating the sparse codes. Figure \ref{Fig:rmld_rounds} illustrates the MSE on the training data and the test data obtained using RMLD with different number of rounds in each level. Since the MSE did not vary significantly beyond $10$ rounds, we fixed $R = 10$ in our reconstruction experiments. 

\begin{table*}[tb]
\renewcommand{\arraystretch}{1.3}
  \centering
\caption{PSNR (dB) of the images recovered from compressed measurements obtained using Gaussian random measurement matrices. Results obtained with the Online (OMP), Online ($\ell_1$), RMLD (MulP), and MLD (MulP) algorithms are given in clockwise order beginning from top left corner. Higher PSNR for each case is indicated in bold font.}
\begin{tabular}{|c||c||c|c||c|c||c|c||c|c||c|c|}
\hline
\textbf{SNR (dB)}&\textbf{\# Measurements}&\multicolumn{2}{c||}{\textbf{Boat}}&\multicolumn{2}{c||}{\textbf{House}}&\multicolumn{2}{c||}{\textbf{Lena}}&\multicolumn{2}{c||}{\textbf{Man}}&\multicolumn{2}{c|}{\textbf{Peppers}}\\
\hline
\hline
\multirow{6}{*}{0}&\multirow{2}{*}{8}&21.43&22.32&22.43&23.36&23.13&24.05&22.23&23.13&19.08&20.01\\
\cline{3-12}
&&21.60&\textbf{22.43}&22.86&\textbf{23.76}&23.52&\textbf{24.39}&22.43&\textbf{23.28}&19.55&\textbf{20.43}\\
\hhline{|~||===========}
&\multirow{2}{*}{16}&22.19&23.19&23.31&24.39&24.02&25.03&22.98&23.97&19.97&21.05\\
\cline{3-12}
&&22.67&\textbf{23.60}&24.18&\textbf{25.15}&24.76&\textbf{25.75}&23.51&\textbf{24.46}&20.96&\textbf{21.95}\\
\hhline{|~||===========}
&\multirow{2}{*}{32}&23.50&24.18&24.94&25.48&25.54&26.08&24.26&24.95&21.71&22.14\\
\cline{3-12}
&&24.18&\textbf{25.15}&25.94&\textbf{27.03}&26.46&\textbf{27.54}&25.02&\textbf{26.02}&22.87&\textbf{23.90}\\
\hline

\hline
\multirow{6}{*}{15}&\multirow{2}{*}{8}&22.77&23.65&23.95&24.97&24.61&25.60&23.58&24.46&20.61&21.68\\
\cline{3-12}
&&23.48&\textbf{24.45}&25.11&\textbf{26.14}&25.69&\textbf{26.70}&24.34&\textbf{25.31}&21.90&\textbf{22.90}\\
\hhline{|~||===========}
&\multirow{2}{*}{16}&23.94&26.33&25.36&28.65&26.03&28.92&24.74&27.08&22.28&25.43\\
\cline{3-12}
&&25.29&\textbf{26.56}&27.43&\textbf{28.71}&27.83&\textbf{29.12}&26.09&\textbf{27.36}&24.34&\textbf{25.60}\\
\hhline{|~||===========}
&\multirow{2}{*}{32}&26.33&\textbf{30.19}&28.16&\textbf{33.59}&28.55&\textbf{33.17}&26.48&\textbf{30.64}&25.21&\textbf{29.78}\\
\cline{3-12}
&&28.13&29.96&30.77&33.41&30.88&32.94&28.81&30.44&27.48&29.47\\

\hline

\hline
\multirow{6}{*}{25}&\multirow{2}{*}{8}&22.82&23.73&24.01&25.09&24.66&25.70&23.63&24.54&20.67&21.83\\
\cline{3-12}
&&23.62&\textbf{24.56}&25.27&\textbf{26.30}&25.85&\textbf{26.83}&24.47&\textbf{25.42}&22.05&\textbf{23.04}\\
\hhline{|~||===========}
&\multirow{2}{*}{16}&24.00&26.57&25.44&29.11&26.10&29.30&24.81&27.32&22.37&25.87\\
\cline{3-12}
&&25.55&\textbf{26.84}&27.80&\textbf{29.23}&28.15&\textbf{29.48}&26.35&\textbf{27.63}&24.68&\textbf{25.99}\\
\hhline{|~||===========}
&\multirow{2}{*}{32}&26.38&\textbf{30.77}&28.71&\textbf{34.81}&28.61&\textbf{33.98}&27.13&\textbf{31.15}&25.97&\textbf{30.74}\\
\cline{3-12}
&&28.72&30.45&31.67&34.57&31.63&33.67&29.37&30.87&28.28&30.54\\
\hline
\end{tabular}
\label{table:cs_perf1}
\end{table*}

Figure \ref{Fig:levmse} compares the approximation error (MSE) obtained for the test dataset with MLD and RMLD ($10$ rounds) respectively, for the case of $T = 100,000$. The figure plots the MSE against the number of levels used in the reconstruction algorithm. Figure \ref{Fig:generalize} shows the approximation error (MSE) for the test image patches obtained with MLD and RMLD dictionaries learned using different number of training samples (varied between $5000$ and $400,000$). Since we proved in Section \ref{sec:gen_proof} the MLD learning generalizes asymptotically, we expect the approximation error for the test data to reduce with the increase in the size of the training set. From both these figures, it is clear that the RMLD scheme results in improved approximation of novel test patches when compared to MLD.

\subsection{Application: Compressed Recovery}
\label{sec:comp_rec}
In compressed recovery, an image is recovered using the low-dimensional random projections obtained from its patches. The performance of compressed recovery based on random measurement systems is compared for MLD, RMLD and online dictionaries. For the case of online dictionaries learned using the algorithm described in \cite{Mairal_Bach_Ponce_Sapiro_2009}, we report results obtained using both $\ell_1$ minimization and the OMP algorithm.  Sensing and recovery were performed on a patch-by-patch basis, on non-overlapping patches of size $8 \times 8$. The multilevel dictionaries were obtained with the parameters described in the previous section, using $400,000$ training samples. The online dictionary was trained using the same training set, with the number of atoms fixed at $1024$. The measurement process can described as $\mathbf{x} = \mathbf{\Phi} \mathbf{\Psi} \mathbf{a} + \boldsymbol{\eta}$ where $\mathbf{\Psi}$ is the dictionary, $\mathbf{\Phi}$ is the measurement or projection matrix, $\boldsymbol{\eta}$ is the AWGN vector added to the measurement process, $\mathbf{x}$ is the output of the measurement process, and $\mathbf{a}$ is the sparse coefficient vector such that $\mathbf{y} = \mathbf{\Psi} \mathbf{a}$. The size of the data vector $\mathbf{y}$ is $M \times 1$, that of  $\mathbf{\Psi}$ is $M \times K$, that of the measurement matrix $\mathbf{\Phi}$ is $N \times M$, where $N < M$, and that of the measured vector $\mathbf{x}$ is $N \times 1$. The entries in the random measurement matrix were independent realizations from a standard normal distribution. We recover the underlying image from its compressed measurements, using online (OMP, $\ell_1$), MLD, and RMLD dictionaries. For each case, we present average results from $100$ trial runs, each time with a different measurement matrix. The recovery performance was evaluated for the set of standard images and reported in Table \ref{table:cs_perf1}. Figure \ref{Fig:perf_CS_images} illustrates the recovered images obtained using different dictionaries with $8$ random measurements under noise ($SNR = 15$ dB). We observed that the \textit{MulP} reconstruction using the proposed MLD dictionary resulted in improved recovery performance, at different measurement conditions, when compared to using greedy pursuit (OMP) with the online dictionary. However, both the \textit{MulP} reconstruction for RMLD and $\ell_1$-based reconstruction with the online dictionary perform significantly better than the other two approaches. In particular, the RMLD reconstruction achieves improved recovery at reduced number of measurements ($8$, $16$) and in presence of noise.

\begin{figure*}[tb]	
\begin{minipage}[b]{0.24\linewidth}
 \centering
 \includegraphics[width = 4cm]{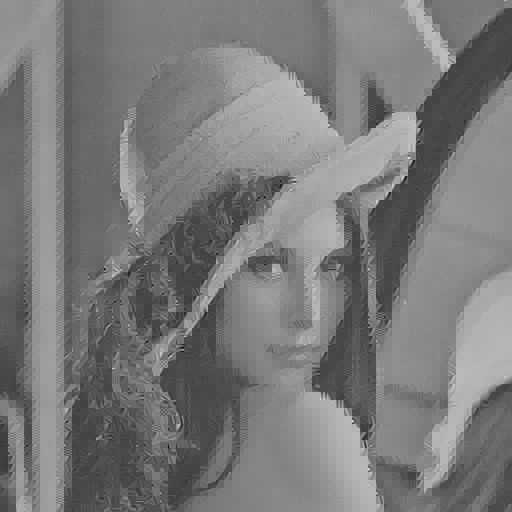}
 \centerline{(a) Online-OMP ($24.73$ dB)}\medskip
\end{minipage}
\hfill
\begin{minipage}[b]{0.24\linewidth}
 \centering
 \includegraphics[width = 4cm]{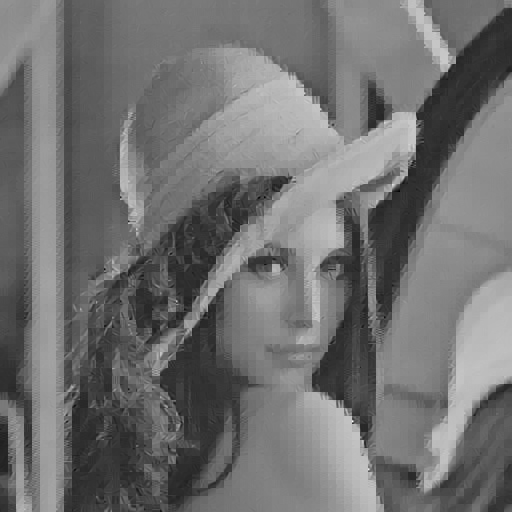}
 \centerline{(b) Online-$\ell_1$ ($25.69$ dB)}\medskip
\end{minipage}
\hfill
\begin{minipage}[b]{0.24\linewidth}
 \centering
 \includegraphics[width = 4cm]{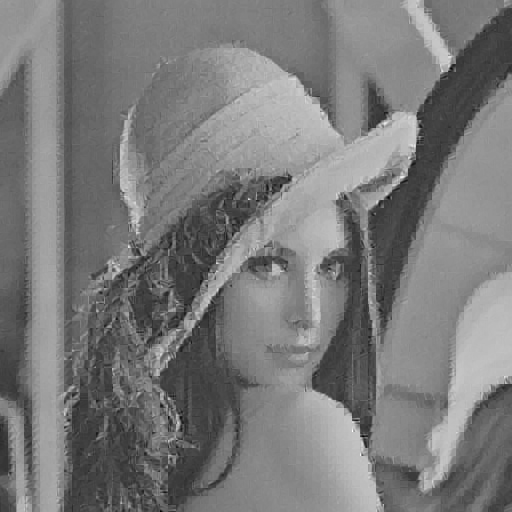}
 \centerline{(c) MLD-MulP ($26.02$ dB)}\medskip
\end{minipage}
\hfill
\begin{minipage}[b]{0.24\linewidth}
 \centering
 \includegraphics[width = 4cm]{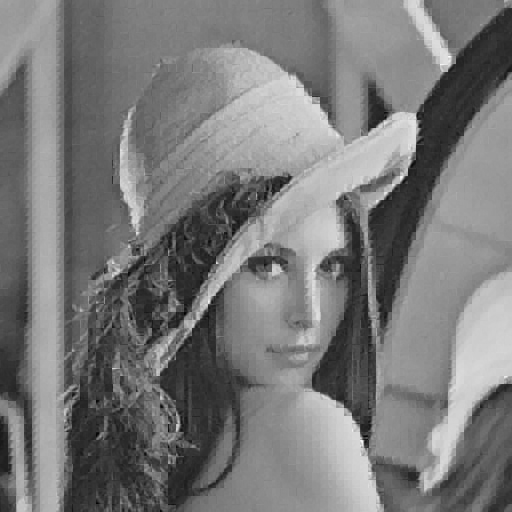}
 \centerline{(d) RMLD-MulP ($27.41$ dB)}\medskip
\end{minipage}
\caption{Compressed recovery of images from random measurements ($N=8$, SNR of measurement process $= 15$dB) using the different dictionaries. In each case the PSNR of the recovered image is also shown.}
\label{Fig:perf_CS_images}
\end{figure*}

\subsection{Application: Subspace Learning}
\label{sec:subspace}
In this section, we evaluate the use of sparse codes obtained with multilevel dictionaries in unsupervised and supervised subspace learning. In particular, we use the locality preserving projections (LPP) \cite{he2003locality} and local discriminant embedding (LDE) \cite{chen2005local} approaches to perform classification on the Forest Covertype dataset. LPP is an unsupervised embedding approach which computes projection directions such that the pairwise distances of the projected training samples in the neighborhood are preserved . Let us define the training data as $\{\mathbf{y}_i|\mathbf{y}_i \in \mathbb{R}^M\}_{i=1}^T$. An undirected graph $G$ is defined, with the training samples as vertices, and the similarity between the neighboring training samples are coded in the affinity matrix $\mathbf{W} \in \mathbb{R}^{T \times T}$. In the proposed setup, we learn a dictionary using the training samples and compute the affinity matrix $\mathbf{W} = |\mathbf{A}^T \mathbf{A}|$, where $\mathbf{A}$ is the matrix of sparse coefficients. Following this, we sparsify $\mathbf{W}$ by retaining only the $\tau$ largest similarities for each sample. Note that this construction is different from the $\ell_1$ graph construction in \cite{l1graph} and computationally efficient.  Let us denote the graph Laplacian as $\mathbf{L} = \mathbf{D}-\mathbf{W}$, where $\mathbf{D}$ is a degree matrix with each diagonal element containing the sum of the corresponding row or column of $\mathbf{W}$. The $d$ projection directions for LPP, $\mathbf{V} \in \mathbb{R}^{M \times d}$, where $d < M$, can be computed by optimizing
\begin{align}
\min_{\text{trace}(\mathbf{V}^T \mathbf{Y} \mathbf{D}\mathbf{Y}^T\mathbf{V}) = \mathbf{I}} \text{trace}(\mathbf{V}^T \mathbf{Y} \mathbf{L}\mathbf{Y}^T\mathbf{V}).
\label{eqn:lpp}
\end{align}Here $\mathbf{Y}$ is a matrix obtained by stacking all data samples as its columns. The embedding for any data sample $\mathbf{z}$ can be obtained as $\mathbf{z} = \mathbf{V}^T \mathbf{y}$. In a supervised setting, we define the intra-class and inter-class affinity matrices, $\mathbf{W}$ and $\mathbf{W}'$ respectively, as
\begin{equation}
w_{ij} = 
\begin{cases}
|\mathbf{a}_i^T \mathbf{a}_j| & \text{if } \pi_i = \pi_j \text{ AND } j \in \mathcal{N}_{\tau}(i), \\
0 & \text{otherwise},
\end{cases}
\label{eqn:ldeintra}
\end{equation}
\begin{equation}
w_{ij}' = 
\begin{cases}
|\mathbf{a}_i^T \mathbf{a}_j| & \text{if } \pi_i \neq \pi_j \text{ AND } j \in \mathcal{N}_{\tau'}(i), \\
0 & \text{otherwise},
\end{cases}
\label{eqn:ldeinter}
\end{equation} where $\pi_i$ is the label of the $i^\text{th}$ training sample, and $\mathcal{N}_{\tau}(i)$ and $\mathcal{N}_{\tau'}(i)$ are the sets that contain the indices of $\tau$ intra-class and $\tau'$ inter-class neighbors of the $i^{\text{th}}$ training sample. The neighbors of a sample $i$ are sorted based on the order of decreasing absolute correlations of their sparse code with $\mathbf{a}_i$. Using these affinity matrices, local discriminant embedding is performed by solving
\begin{equation}
\argmax_{\mathbf{V}} \frac{\text{Tr}[\mathbf{V}^T \mathbf{X}^T \mathbf{L}' \mathbf{X} \mathbf{V}]}{\text{Tr}[\mathbf{V}^T \mathbf{X}^T \mathbf{L} \mathbf{X} \mathbf{V}]}.
\label{eqn:ldeopt}
\end{equation}

For both the subspace learning approaches, we varied the number of training samples between $250$ and $2160$ per class and fixed the embedding dimension $d = 30$. For MLD and RMLD learning, we fixed the number of levels at $32$ and the number of rounds, $R$, for RMLD was fixed at $30$. For comparison, we use learned iterative dictionaries of size $1024$, using $\ell_1$ minimization in the SPAMS toolbox \cite{Mairal_Bach_Ponce_Sapiro_2009} and the Lagrangian dual method (\textit{SC-LD}) \cite{lee2006efficient} . Finally, classification was performed using a simple $1-$nearest neighbor classifier. Table \ref{table:sl_lpp} and Table \ref{table:sl_lde} show the classification accuracies obtained using the different dictionaries, for both the subspace learning approaches. As it can be observed, graphs constructed with the proposed multilevel dictionaries provide more discriminative embeddings compared to the other approaches.

\section{Conclusions}
\label{sec:conclusions}
We presented a multilevel learning algorithm to design generalizable and stable global dictionaries for sparse representations. The proposed algorithm uses multiple levels of $1-$D subspace clustering to learn dictionaries. We also proposed a method to infer the number of atoms in each level, and provided an ensemble learning approach to create robust dictionaries. We proved that the learning algorithm converges, exhibits energy hierarchy, and is also generalizable and stable. Finally, we demonstrated the superior performance of MLD in applications such as compressive sensing and subspace learning. Future research could include providing an online framework for MLD that can work with streaming data, and also developing hierarchical dictionaries that are optimized for robust penalties on reconstruction error.

\begin{table}[tb]
\renewcommand{\arraystretch}{1.3}
  \centering
\caption{Unsupervised subspace Learning - Classification accuracies with a 1-NN classifier.}
\begin{tabular}{|c||c|c|c|c|}
\hline
\textbf{\# Train}&\multicolumn{4}{c|}{\textbf{Graph Construction Approach}}\\
\cline{2-5}
\textbf{Per Class}&\textbf{LPP}&\textbf{SC-LD}&\textbf{SC-MLD}&\textbf{SC-RMLD}\\
\hline
\hline
250&57.11&57.5&58.2&\textbf{58.9}\\
500&58.8&59.9&61.35&\textbf{62.58}\\
1000&66.33&67.6&68.94&\textbf{69.91}\\
1500&70.16&71.32&73.65&\textbf{74.38}\\
2160&74.39&75.8&78.26&\textbf{78.84}\\
\hline
\end{tabular}
\label{table:sl_lpp}
\end{table}

\begin{table}[tb]
\renewcommand{\arraystretch}{1.3}
  \centering
\caption{Supervised subspace Learning - Classification accuracies with a 1-NN classifier.}
\begin{tabular}{|c||c|c|c|c|}
\hline
\textbf{\# Train}&\multicolumn{4}{c|}{\textbf{Graph Construction Approach}}\\
\cline{2-5}
\textbf{Per Class}&\textbf{LDE}&\textbf{SC-LD}&\textbf{SC-MLD}&\textbf{SC-RMLD}\\
\hline
\hline
250&59.23&59.1&59.3&\textbf{59.6}\\
500&60.4&60.8&61.9&\textbf{62.7}\\
1000&68.1&68.71&69.6&\textbf{70.43}\\
1500&72.9&73.5&74.41&\textbf{75.09}\\
2160&77.3&78.07&79.53&\textbf{80.01}\\
\hline
\end{tabular}
\label{table:sl_lde}
\end{table}

\bibliographystyle{IEEEtran}
\bibliography{refs}
\end{document}